\DeclareSymbolFontAlphabet{\mathbb}{AMSb}
\DeclareSymbolFontAlphabet{\mathbbl}{bbold}
\DeclareSymbolFontAlphabet{\mathbb}{AMSb}
\DeclareSymbolFontAlphabet{\mathbbl}{bbold}
\def\W{{\mathcal W}}
\def\E{{\mathbb E}}
\def\W{{\mathcal W}}
\newcommand{\K}{\ensuremath{\mathcal K}}
\newcommand{\ignore}[1]{}
\theoremstyle{plain}
\newtheorem{theorem}{Theorem}
\newtheorem{lemma}[theorem]{Lemma}
\newtheorem{corollary}[theorem]{Corollary}
\newtheorem{claim}[theorem]{Claim}
\newtheorem{assumption}{Assumption}
\newtheorem*{theorem*}{Theorem}
\newtheorem*{lemma*}{Lemma}
\newtheorem*{corollary*}{Corollary}
\newtheorem*{proposition*}{Proposition}
\newtheorem*{claim*}{Claim}
\newtheorem*{fact*}{Fact}
\newtheorem*{observation*}{Observation}
\newtheorem*{assumption*}{Assumption}
\theoremstyle{definition}
\newtheorem{definition}[theorem]{Definition}
\newtheorem{remark}[theorem]{Remark}
\newtheorem*{definition*}{Definition}
\newtheorem*{remark*}{Remark}
\newtheorem*{example*}{Example}
\theoremstyle{plain}
\newtheorem*{theoremaux}{\theoremauxref}
\gdef\theoremauxref{1}
\DeclareMathAlphabet{\mathbfsf}{\encodingdefault}{\sfdefault}{bx}{n}
\DeclareMathOperator*{\argmin}{arg\,min}
\DeclareMathOperator*{\argmax}{arg\,max}
\renewcommand{\O}{\mathcal{O}}
\newcommand{\poly}{\mathrm{poly}}
\newcommand{\reals}{\mathbb{R}}
\renewcommand{\leq}{~\le~}
\renewcommand{\geq}{~\ge~}
\let\oldtfrac\tfrac
\renewcommand{\tfrac}[2]{\smash{\oldtfrac{#1}{#2}}}
\let\nablaold\nabla
\renewcommand{\nabla}{\nablaold\mkern-2.5mu}
\title{A Boosting Approach to Reinforcement Learning}
\author{%
    Nataly Brukhim \\
    Princeton University\\
    \texttt{nbrukhim@cs.princeton.edu}\\
    \And Elad Hazan\\
    Princeton University\\
    Google AI Princeton\\
    \texttt{ehazan@cs.princeton.edu}\\
    \And Karan Singh\\
    Carnegie Mellon University\\
    \texttt{karansingh@cmu.edu}
}
\begin{document}

\maketitle

    \begin{abstract}

Reducing reinforcement learning to supervised learning is a well-studied and effective approach that leverages the benefits of compact function approximation to deal with large-scale Markov decision processes. Independently, the boosting methodology (e.g. AdaBoost) has proven to be indispensable in designing efficient and accurate classification algorithms by combining inaccurate {\em rules-of-thumb}.

In this paper, we take a further step: we reduce reinforcement learning to a sequence of weak learning problems. Since weak learners perform only marginally better than random guesses, such subroutines constitute a weaker assumption than the availability of an accurate supervised learning oracle. We prove that the sample complexity and running time bounds of the proposed method do not explicitly depend on the number of states.

While existing results on boosting operate on convex losses, the value function over policies is non-convex. We show how to use a non-convex variant of the Frank-Wolfe method for boosting, that additionally improves upon the known sample complexity and running time even for reductions to supervised learning.
\end{abstract}

    \section{Introduction}
In reinforcement learning, Markov decision processes (MDP) model the mechanism of learning from rewards, as opposed to examples. Although the case of tabular MDPs is well understood, the main challenge in applying RL in the real-world is the size of the state space in practical domains.

This challenge of finding efficient and provable algorithms for MDPs with large state space is the focus of our study. Various techniques have been suggested and applied to cope with very large MDPs. One class of approaches attempts to approximate either the value or the transition function of the underlying MDP by using a parametric function class. Such approaches invariably make strong {\em realizability assumptions} to produce global optimality guarantees. Another class of approaches, {\em so-called} direct methods, produces a near-optimal policy that maximizes the expected return from a given policy class. To deal with the challenge of large (possibly innumerable) policy classes, a popular strategy \cite{kakade2002approximately} is to the frame policy search as a sequence of supervised learning problems. Such approaches yield global optimality guarantees under state coverage assumptions without reliance on realizability, and have inspired practical adaptations for sampling-based policy search.

In this paper, we study another methodology to derive provable algorithms for reinforcement learning: ensemble methods for aggregating weak or approximate algorithms into substantially more accurate solutions. Our proposal extends the methodology of boosting, typically used to solve supervised learning instances \citep{schapire2012boosting}, to reinforcement learning. A typical boosting algorithm (e.g. AdaBoost) iteratively constructs a near-optimal classifier by combining computationally cheap, yet inaccurate {\em rules-of-thumb}. Unlike RL reductions to supervised learning which assume the existence of an efficient and accurate classification or regression procedure, the proposed algorithms builds on learning algorithms that perform only ever-so-slightly better than a random guess, and which thus may be produced cheaply both in computational and statistical terms.  

Concretely, we assume access to a weak learner: an efficient sample-based procedure that is capable of generating an approximate solution to any weighted multi-class objective over a fixed policy class. We describe an algorithm that iteratively calls this procedure on carefully constructed new objectives, and aggregates the solution into a single policy. We prove that after sufficiently many iterations, our resulting policy has competitive global gurantees on performacnce. Interestingly, unlike boosting algorithms for regression and classification, our resulting aggregation of weak learners is non-linear. 

\subsection{Challenges and techniques}

Reinforcement learning is quite different from supervised learning and several difficulties have to be circumvented for boosting to work. Among the challenges that the reinforcement learning setting presents, consider the following,
\begin{enumerate}[(a)]
    \item  The value function is not a convex or concave function of the policy. This is true even in the tabular case, and even more so if we use a parameterized policy class.

    \item The transition matrix is unknown, or prohibitively large to manipulate for large state spaces. This means that even evaluation of a policy cannot be exact, and can only be computed approximately.

    \item
    It is unrealistic to expect a weak learner that attains near-optimal value for a given linear objective over the policy class. At most one can hope for a multiplicative and/or additive approximation of the overall value.

\end{enumerate}

\begin{table*}
    \begin{tabular}{@{}p{\textwidth}@{}}
        \centering
        \bgroup
        \def\arraystretch{2}
        \begin{tabular}{c|c|c}
            &
            Supervised weak learner &
            Online weak learner 
            \\
            \hline
            Episodic model           & $1/\alpha^4\varepsilon^5$ & $1/\alpha^2\varepsilon^3$ 
            \\
            \hline
            Rollouts w. $\nu$-resets & $1/\alpha^4\varepsilon^6$               & $1/\alpha^2\varepsilon^4$  
            \\
            \hline
        \end{tabular}
        \egroup
    \end{tabular}
    \caption{Sample complexity of the proposed algorithms for different $\alpha$-weak learning models (supervised \& online) and modes of accessing the MDP (rollouts \& rollouts with reset distribution $\nu$), in terms of  $\epsilon$ and $\alpha$,
    suppressing other terms. 
    This work is the first to introduce a reduction of RL to \textit{weak} supervised learning.
    See Theorem~\ref{thm:MAIN1} for details. }
    \label{tab:table1}
\end{table*}
\begin{table*}
    \begin{tabular}{@{}p{\textwidth}@{}}
        \centering
        \bgroup
        \def\arraystretch{1.6}
        \begin{tabular}{c|c|c}
            &
          \multicolumn{2}{c}{Supervised strong learner}
            \\ \hline
            & This work (Corollary~\ref{thm:MAIN3}) & CPI \cite{kakade2002approximately}\\
            \hline
            Episodic model         &    $1/\varepsilon^3$ & $1/\varepsilon^4$
            \\
            \hline
            Rollouts w. $\nu$-resets   & $1/\varepsilon^4$ & $1/\varepsilon^4$
            \\
             \hline
        \end{tabular}
        \egroup
    \end{tabular}
    \caption{Compared to previous work \cite{kakade2002approximately}, the table shows sample complexity of the proposed algorithm for a strong ($\alpha=1$) supervised learning model and different modes of accessing the MDP.} 
    \label{tab:table2}
\end{table*}

Our approach overcomes these challenges by applied several new as well as recently developed techniques. To overcome the nonconvexity of the value function, we use a novel variant of the Frank-Wolfe optimization algorithm that simultaneously delivers on two guarantees. First, it finds a first order stationary point with near-optimal rate. Secondly, if the objective happens to admit a certain gradient domination property, an important generalization of convexity, it also guarantees near optimal value.
The application of the nonconvex Frank-Wolfe method is justified due to previous recent investigation of the policy gradient algorithm \citep{agarwal2019theory,agarwal2020pc}, which identified conditions under which the value function is gradient dominated.

The second information-theoretic challenge of the unknown transition function is overcome by careful algorithmic design: our boosting algorithm requires only samples of the transitions and rewards, obtained by rollouts on the MDP.

The third challenge is perhaps the most difficult to overcome. Thus far, the use of the Frank-Wolfe method in reinforcement learning did not include a multiplicative approximation, which is critical for our application. We adapt the techniques used for boosting in online convex optimization  \citep{hazan2021boosting} with a multiplicative weak learner to our setting, by non-linearly aggregating (using a 2-layer  network) the weak learners. This aspect is perhaps of general interest to boosting algorithm design, which is mostly based on linear aggregation.

\subsection{Our contributions}

Our main contribution is a novel efficient boosting algorithm for reinforcement learning.  Our techniques apply in various settings and the sample complexity bounds of all of our results are summarized in Tables \ref{tab:table1} and \ref{tab:table2}.

The input to this algorithm is a weak learning method capable of approximately solving a weighted multi-class problem instance over a certain policy class.
The output of the algorithm is a policy which does not belong to the original class considered, hence being an instance of {\em improper} learning.
It is rather a non-linear aggregation of policies from the original class, according to a two-layer neural network.
This is a result of the two-tier structure of our algorithm: an outer loop of non-convex Frank-Wolfe method, and an inner loop of online convex optimization based boosting. The final policy comes with provable global optimality guarantees.

Beyond novelty of techniques, an important contribution (Table \ref{tab:table1}) of our work is to highlight the quantitative difference in guarantees that depend on the mode of accessing the MDP (episodic rollouts vs. access to an exploratory reset distrbution) and the nature of the weak learners (online vs statistical), thus indicating that some algorithmic choices may be preferable compared to others in terms of speed of convergence and sample complexity.

As with existing reductions to supervised learning \cite{kakade2002approximately}, these global convergence guarantees happen under appropriate state coverage assumptions either via access to a reset distribution that has some overlap with the state distribution of the optimal policy, or by constraining the policy class to policies that explore sufficiently. Yet another contribution of our work is to show an improved sample complexity result in the latter setting, {\em even when considering reductions to supervised learning instances}. This improvement in convergence in well-studied settings is documented in Table \ref{tab:table2}. 







\subsection{Related work}

Reinforcement learning approaches for dealing with large-scale MDPs rely on function approximation \cite{Sutton1999}. Such function approximation may be performed on the underlying conditional probability of transition (e.g. \cite{sun2019model, jin2020provably}) or the value function (e.g. \cite{weisz2021query, wang2021exponential}). The provable guarantees in such methods come at the cost of strong realizability assumptions. In contrast, the so-called direct approaches attempt policy search over an appropriate policy class \citep{agarwal2019theory,agarwal2020pc}, and rely on making making incremental updates, such as variants of Conservative Policy Iteration (CPI) \citep{kakade2002approximately, scherrer2014local,agarwal2022variance}, and Policy Search by Dynamic Programming (PSDP)\citep{bagnell2003policy}. These provide convergence guarantees under appropriate state coverage assumptions comparable to ones made in this work.

Our boosting approach for provable RL builds on the vast literature of boosting for supervised learning \citep{schapire2012boosting}, and recently online learning \citep{leistner2009robustness, chen2012online, chen2014boosting, beygelzimer2015optimal, jung2017online, jung2018online}.
One of the crucial techniques important for our application is the extension of boosting to the online convex optimization setting, with bandit information \citep{brukhim2021online}, and critically with a multiplicative weak learner \citep{hazan2021boosting}. This latter technique implies a non-linear aggregation of the weak learners. Non-linear boosting was only recently investigated in the context of classification \citep{alon2020boosting}, where it was shown to potentially enable significantly more efficient boosting. Another work on boosting in the context of control of dynamical systems \citep{agarwal2020boosting}. However, this work critically requires knowledge of the underlying dynamics (transitions) and makes convexity assumptions, which we do not, and cannot cope with a multiplicative approximate weak learner.

The Frank-Wolfe algorithm is extensively used in machine learning, see e.g. \citep{jaggi2013revisiting}, references therein, and recent progress in stochastic Frank-Wolfe methods \citep{hassani2017gradient,mokhtari2018stochastic, chen2018projection, xie2019stochastic}.
Recent literature has applied a variant of this algorithm to reinforcement learning in the context of state space exploration \citep{hazan2019provably}.

    \section{Preliminaries}

\paragraph{Optimization.}

We say that a differentiable function $f: \K \mapsto \reals$ over some domain $\K \subset \mathbb{R}^d$ is $L$-smooth with respect to some norm $\|\cdot\|_*$ if for every $x,y \in \K$ we have
$ \left|f(y) - f(x) - \nabla f(x)^\top (y-x)\right| \leq \frac{L}{2} \|x-y\|_*^2 . $
We define the projection $\Gamma:\reals^{|A|} \to \Delta_A$, with respect to a set $A$, where $\Delta_A$  denotes the probability simplex over $A$.
For any $x \in \reals^{|A|}$,
$ \Gamma[x] = \argmin_{y \in \Delta_A}  \| x - y \| . $
An important generalization of the property of convexity we use henceforth is that of gradient domination.
\begin{definition}[Gradient Domination]
    \label{def:grad_dom}
    A function $f:\K\to\reals$ is said to be $(\kappa, \tau, \K_1,\K_2)$-locally gradient dominated (around $\K_1$ by $\K_2$) if for all $x\in \K_1$, it holds that
    $$ \max_{y\in\K}f(y) - f(x) \leq \kappa \cdot \max_{ y\in\K_2  } \left\{ \nabla f(x)^\top (y - x) \right\} +\tau . $$
\end{definition}

\paragraph{Markov decision process.}

An infinite-horizon discounted Markov Decision Process (MDP) $\mathcal{M} = (S, A, P, r, \gamma, d_0)$ is specified by: a state space $S$, an action space $A$, a transition model $P$ where $P(s'|s, a)$ denotes the probability of immediately transitioning to state $s'$ upon taking action $a$ at state $s$, a reward function $r: S \times A \rightarrow [0, 1]$ where $r(s, a)$ is the immediate reward associated with taking action $a$ at state $s$, a discount factor $\gamma \in [0,1)$; a starting state distribution $d_0$ over $S$.
For any infinite-length state-action sequence (hereafter, called a trajectory), we assign the following value
$ V(\varsigma = (s_0,a_0,s_1,a_1,\dots)) = \sum_{t=0}^\infty \gamma^t r(s_t,a_t) . $ The agent interacts with the MDP through the choice of stochastic policy $\pi:S\to \Delta_A$ it executes.
The execution of such a policy induces a distribution over trajectories $\varsigma=(s_0,a_0,\dots)$ as
$    P(\varsigma|\pi) = d_0(s_0) \prod_{t=0}^\infty (P(s_{t+1}|s_t,a_t)\pi(a_t|s_t)).\label{eq:traj}
$ Using this description we can associate a state $V^\pi(s)$ and state-action $Q^\pi(s,a)$ value function with any policy $\pi$.
For an arbitrary distribution $d$ over $S$, define:
$$ Q^\pi(s, a) = \E \Bigg[ \sum_{t=0}^{\infty} \gamma^t r(s_t, a_t)
\Big|\  \pi, s_0=s, a_0=a \Bigg], $$
$$ V^\pi(s) = \E_{a\sim \pi(\cdot|s)}\left[ Q^\pi(s,a) | \pi, s \right], \ V^\pi_{d} = \E_{s_0\sim d} \left[V^\pi(s) | \pi\right]. $$
Here the expectation is with respect to the randomness of the trajectory induced by $\pi$ in $\mathcal{M}$.
When convenient, we shall use $V^\pi$ to denote $V^\pi_{d_0}$, and $V^*$ to denote $\max_\pi V^\pi$.

Similarly, to any policy $\pi$, one may ascribe a (discounted) state-visitation distribution $d^\pi = d^\pi_{d_0}$.
$$ d^\pi_{d} (s) = (1-\gamma)\sum_{t=0}^\infty \gamma^t \sum_{\varsigma: s_t =s} P(\varsigma|\pi, s_0\sim d) $$

\paragraph{Modes of Accessing the MDP.}\label{subsec:mdp_access}

We henceforth consider two modes of accessing the MDP, that are standard in the reinforcement learning literature, and provide different results for each.

The first natural access model is called the {\bf episodic rollout setting.} This mode of interaction allows us to execute a policy, stop and restart at any point, and do this multiple times.


Another interaction model we consider is called {\bf rollout with $\nu$-restarts.} This is similar to the episodic setting, but here the agent may draw from the MDP a trajectory seeded with an initial state distribution $\nu\neq d_0$.
This interaction model was considered in prior work on policy optimization \cite{kakade2002approximately,agarwal2019theory}.
The motivation for this model is two-fold: first, $\nu$ can be used to incorporate priors (or domain knowledge) about the state coverage of the optimal policy; second, $\nu$ provides a mechanism to incorporate exploration into policy optimization procedures.



\subsection{Weak learning}
Our boosting algorithms henceforth call upon weak learners to generate weak policies. We formalize the notion of a weak learner next. We consider two types of weak learners, and give different end results based on the different assumptions: weak supervised and weak online learners. In the discussion below, let $\pi_{Rand}$ be a uniformly random policy, i.e. $\forall (s,a)\in S\times A, \pi_{Rand}(a|s)=1/|A|$. The formal definition and results for the online setting are deferred to the appendix. In what follows we define the supervised weak learning model. 



The natural way to define weak learning is an algorithm whose performance is always slight better than that of random policy, one that chooses an action uniformly at random at any given state.
However, in general no learner can outperform a random learner over all label distributions. This motivates the literature on agnostic boosting \citep{kanade2009potential,brukhim2020online,hazan2021boosting} that defines a weak learner as one that can approximate the best policy in a given policy class.

\begin{definition}[Weak Supervised Learner]
    \label{def:wl}
    Let $\alpha \in (0,1]$.
    Consider a class $\mathcal{L}$ of linear loss functions
    $\ell:\reals^A \to \reals$, a family $\mathbb{D}$ of distributions that are supported over $S\times \mathcal{L}$, and policy class $\Pi$. A weak supervised learning algorithm, for every $\varepsilon, \delta > 0$, given $m(\varepsilon,\delta)=\frac{\log |{\W}|}{\varepsilon^2}\log \frac{1}{\delta}$ samples $D_m$ from any distribution $\mathcal{D} \in \mathbb{D}$ outputs a policy $\mathcal{W}(D_m)\in\Pi$ such that with probability $1-\delta$,
    \begin{align*}
        \mathbb{E}_{(s,\ell)\sim \mathcal{D}} \big[\ell(\mathcal{W}(D_m))\big] &\leq
        \alpha
        \min_{\pi^*\in\Pi}\mathbb{E}_{(s,\ell)\sim \mathcal{D}} \big[\ell(\pi^*(s))\big] + (1-\alpha) \ 
        \mathbb{E}_{(s,\ell)\sim \mathcal{D}} \big[\ell(\pi_{Rand}(s))\big] + \varepsilon.
    \end{align*}
\end{definition}

Note that the weak learner outputs a policy in $\Pi$ which is approximately competitive against the class $\Pi$.
As an additional relaxation, instead of requiring that the weak learning guarantee holds for all distributions, in our setup, it will be sufficient that the weak learning assumption holds over \emph{natural} distributions. Specifically, we define a class of \emph{natural} distributions $\mathbb{D}$, such that $\mathcal{D}\in \mathbb{D}$  
    if and only if there exists some $\pi\in\mathbbl{\Pi}$ such that,
    $ \mathcal{D}(s) = \int_\ell \mathcal{D}(s,\ell) d\mu(\ell) = d^\pi(s). $ In particular, while a \emph{natural} distribution may have arbitrary distribution over labels, its marginal distribution over states must be realizable as the state distribution of some policy in $\mathbbl{\Pi}$ over the MDP $\mathcal{M}$.
Therefore, the complexity of weak learning adapts to the complexity of the MDP itself.
As an extreme example, in stochastic contextual bandits where policies do not affect the distribution of states (say $d_0$), it is sufficient that the weak learning condition holds with respect to all couplings of a single distribution $d_0$.

    \section{Algorithm \& Main Results}\label{sec:main}

In this section we describe our RL boosting algorithm. Here we focus on the case where a supervised weak learning is provided. The online weak learners variant of our result is detailed in the appendix.
We next define several definitions and algorithmic subroutines required for our method.

\subsection{Policy aggregation}

For a base class of policies $\Pi$, our algorithm incrementally builds a more expressive policy class by aggregating base policies via both linear combinations and non-linear transformations.
In effect, the algorithm produces a finite-width depth-2 circuit over some subset of the base policy class. That is, our approach can be thought of as an aggregation of base policies, which forms a 2-layer neural network, as depicted in Figure \ref{fig:M1}. The leaves of the tree are the policies $\pi \in \Pi$ the base policy class. These are then linearly aggregated to form the first layer of the tree, denoted $\Tilde{\pi}_1, \Tilde{\pi}_2$ in Figure \ref{fig:M1}. 
 
Next, each linear combination of policies in the overall aggregation undergoes 
a projection operation. The projection may be viewed as a non-linear activation function, such as ReLU, in deep learning terms.
Note that the projection of any function from $S$ to $\reals^{|A|}$ produces a policy, i.e. a mapping from states to distributions over actions. In the analysis of our algorithm we give a particular projection operation $\Gamma[\cdot]$ which allows us to yield the desired guarantees. 

\begin{definition}[Policy Projection]
    Given $\Tilde{\pi}:S\to \reals^{|A|}$, define a projected policy $\pi=\Gamma[\Tilde{\pi}]$ to be a policy such that simultaneously for all $s\in S$, it holds that $\pi(\cdot | s) = \Gamma\left[\Tilde{\pi}(s)\right]. $
\end{definition}

\begin{definition}[Policy Tree]
    \label{def:pol_tree}
   A \textit{Policy Tree} $\mathbbl{\Pi} \subseteq S\to\Delta_A$ with respect to $\Pi \subseteq S\to\Delta_A$ some  base policy class, and $N, T \in \mathbb{N}$,
 is a linear combination of $T$ projected policies $\Gamma[\Tilde{\pi}]$, where each $\Tilde{\pi}$ is a linear combination of $N$ base policies $\pi \in \Pi$.
\end{definition}

This final definition describes the set of possible outputs of the boosting procedure. It is important that the policy that the boosting algorithm outputs can be evaluated efficiently. In the appendix we show it is indeed the case (see Lemma \ref{claim:eff_eval}). Hereafter, we refer to a Policy Tree with respect to $\Pi$, $N$ and $T$, as $\mathbbl{\Pi}$ for $N,T=O(\poly(|A|,(1-\gamma)^{-1},\varepsilon^{-1},\alpha^{-1},\log \delta^{-1}))$ specified later.

\tikzset{
    solid node/.style={circle,draw,inner sep=1.2,fill=black},
    hollow node/.style={circle,draw,inner sep=2.3},
    square/.style={rectangle,draw,inner sep=1.5, fill=black},
}
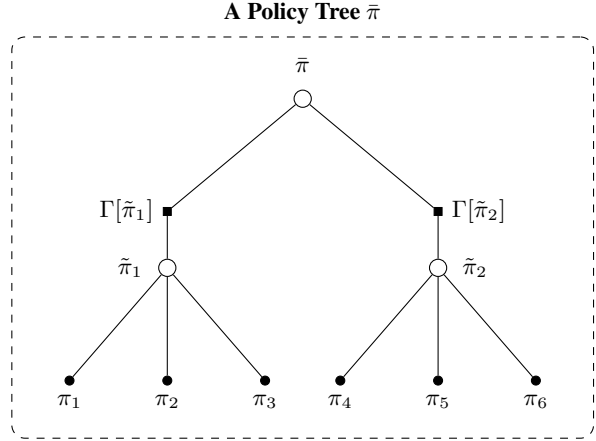
\begin{figure}
    \centering
    \begin{tikzpicture}[font=\footnotesize]
        \tikzset{
            level 1/.style={level distance=15mm,sibling distance=36mm},
            level 2/.style={level distance=7.5mm,sibling distance=10mm},
            level 3/.style={level distance=15mm,sibling distance=13mm},
            level 4/.style={level distance=15mm,sibling distance=20mm},
        }
        \node(t0)[hollow node,label=above:{
        \makecell{\small{$\bar{\pi}$}}}]{}
        child{node(a1)[square,label=left:{$\Gamma[\Tilde{\pi}_1]$}]{}
        child{node(l1)[hollow node,label=left:{$\Tilde{\pi}_1 \ $}]{}
        child{node(b1)[solid node,label=below:{$\pi_1$}]{}edge from parent node[left]{}}
        child{node(b2)[solid node,label=below:{$\pi_2$}]{}edge from parent node[left]{}}
        child{node(b3)[solid node,label=below:{$\pi_3$}]{}edge from parent node[right]{}}
        edge from parent node[left]{}
        }edge from parent node[left]{$ \ \ \ $}
        }
        child{node(a2)[square,label=right:{$\Gamma[\Tilde{\pi}_2]$}]{}
        child{node(l2)[hollow node,label=right:{$\ \Tilde{\pi}_2$}]{}
        child{node[solid node,label=below:{$\pi_4$}]{}edge from parent node[left]{}}
        child{node[solid node,label=below:{$\pi_5$}]{}edge from parent node[left]{}}
        child{node(b6)[solid node,label=below:{$\pi_6$}]{}edge from parent node[right]{}}
        edge from parent node[right]{}
        }      edge from parent node[right]{}};
        \node [fit=(b1)(b6)(l1)(t0),inner sep=0.7cm,dashed,draw,rounded corners=5pt,label=above:{\makecell{\small{\textbf{A Policy Tree}} $\bar{\pi} $}}]{};
    \end{tikzpicture}
    \caption{The figure illustrates a Policy Tree hierarchy (see Definition \ref{def:pol_tree}), output of 
the boosting procedure specified in Algorithm \ref{alg:MAIN1}. Specifically, it is 
obtained by setting $N=3$ on the inner loop of Internal Boost (Algorithm \ref{alg:int_boo}), and $T=2$ on the main booster (Algorithm \ref{alg:MAIN1}). Overall we get all base policies $\pi_1,...,\pi_6 \in \Pi$  on the lower level, to form the Policy Tree $\bar{\pi} \in \mathbbl{\Pi}$.
    } \label{fig:M1}
\end{figure}

\subsection{Main results}

\begin{algorithm}[t]
    \caption{RL Boosting}\label{alg:MAIN1}
    \begin{algorithmic}[1]
        \STATE \textbf{Input}: number of iterations $T$, initial state distribution $\mu$, and $P, N,M$ parameters for Internal Boost.
        \STATE Initialize a policy $\pi_0\in \Pi$ arbitrarily. 
        \FOR{$t=1$ {\bfseries to} $T$}
        \STATE Run Internal Boost (Algorithm \ref{alg:int_boo}) with distribution $\mu$ and policy $\pi_t$ to obtain $\pi_t'$.
        \STATE Update $\pi_t = (1-\eta_{1,t})\pi_{t-1} + \eta_{1,t} \pi'_t$.
        \ENDFOR
    \STATE Run each policy $\pi_t$ for $P$ rollouts to compute an empirical estimate $\widehat{V^{\pi_t}}$  of the expected return.
    \RETURN $\bar{\pi} := \pi_{t'}$ where $t'=\argmax_t \widehat{V^{\pi_t}}$.
    \end{algorithmic}
\end{algorithm}

\begin{algorithm}[t]
    \caption{Internal Boost}\label{alg:int_boo}
    \begin{algorithmic}[1]
        \STATE \textbf{Input}: number of iterations $N$, number of episodes $M$, initial policy $\pi$, initial state distribution $\mu$.
        \STATE Set $\Tilde{\pi}_{0}$ to be an arbitrary policy in $\Pi$.
        \FOR{$n=1$ {\bfseries to} $N$}
        \STATE Execute $\pi$ with $\mu$ via Algorithm \ref{alg:q_sampler} for $M$ episodes, to get         $D_{n}=\{(s_i, \widehat{Q_i})_{i=1}^M\}.$
        \STATE Modify $D_{n}$ to produce a new dataset $D'_{n}=\{(s_i, f_i)\}_{i=1}^M$, such that for all $i \in [m]$:
        $$ f_{i} =  \frac{1}{\beta} \left(y_i - \Tilde{\pi}_{n}(\cdot|s_i) \right), \quad         y_i = \argmin_{y\in \reals^{|A|}}\Large\{ -\widehat{Q}_i^\top y + G\min_{z\in \Delta_A}\|z-y\|+\frac{\|\Tilde{\pi}_{n}(\cdot|s_i)-y\|^2}{2\beta} \Large\} $$
        where  $G= \frac{A}{1-\gamma}, \beta = \frac{2\gamma}{(1-\gamma)^3} $ and $f_i, \widehat{Q}_i \in \reals^{|A|}$.

        \STATE Let $\mathcal{A}_{n}$ be the policy chosen by the weak learning oracle when given data set $D'_{t,n}$.
        \STATE Update 
        $$
        \Tilde{\pi}_{n} = (1-\eta_{2,n})\Tilde{\pi}_{n-1} + \frac{\eta_{2,n}}{\alpha} \mathcal{A}_{n}.
        $$
        \ENDFOR
        \RETURN $\Gamma\left[\Tilde{\pi}_{N}\right]$.
     \end{algorithmic}
\end{algorithm}
\begin{algorithm}[t]
    \begin{algorithmic}[1]
        \STATE Sample state $s_0 \sim \mu$, action  $a' \sim \mathcal{U}(A)$ uniformly.
        \STATE Sample $s\sim d^{\pi}$ as follows:
        at every timestep $h$,  with probability $\gamma$, act
        according to $\pi$; else, accept  $s_h$ as the sample and
        proceed to Step 3. 
        \STATE\label{state:next_step} Take action $a'$ at state $s_h$, then
        continue to execute $\pi$, and use a termination probability of
        $1-\gamma$. Upon termination, set
        $R(s_h,a')$ as the \emph{undiscounted} sum
        of rewards from time $h$ onwards.
        \STATE Define the vector $\widehat{Q^\pi_{s_h}}$, such that for all $a \in A$, $\widehat{Q^\pi_{s_h}}(a) = |A|\cdot R(s_h,a') \cdot \mathbb{I}_{a=a'}$.
        \RETURN $(s_h, \widehat{Q^\pi_{s_h}})$.
    \end{algorithmic}
    \caption{Trajectory Sampler: samples a state $s \sim d^{\pi}$, and an unbiased estimate of $Q^\pi_s$}
    \label{alg:q_sampler}
\end{algorithm}

Next, we give the main results of our RL boosting algorithm via weak supervised learning, specified in Algorithm \ref{alg:MAIN1}.


To state the results, we need the following definitions. The first generalizes the policy completeness notion from \citep{scherrer2014local}. It may be seen as the policy-equivalent analogue of inherent bellman error \citep{munos2008finite}. Intuitively, it measures the degree to which a policy in $\Pi$ can best approximate the bellman operator in an average sense with respect to the state distribution induced by a policy from $\mathbbl{\Pi}$.

\begin{definition}[Policy Completeness]
    For any initial state distribution $\mu$, and policy classes $\Pi, \mathbbl{\Pi}$,
    define
$       \mathcal{E}_\mu =
         \max_{\pi\in \mathbbl{\Pi}} \min_{\pi^*\in \Pi} \E_{s\sim d^\pi_\mu} \left[ \max_{a\in A}Q^\pi(s,a) - Q^\pi(s,\cdot )^\top \pi^*(\cdot | s) \right].
$\end{definition}

\begin{definition}[Distribution Mismatch]
    Let $\pi^* = \argmax_\pi V^\pi$, and $\nu$ a fixed initial state distribution (see section \ref{subsec:mdp_access}). Define the following distribution mismatch coefficients:
    $ C_\infty= \max_{\pi\in \mathbbl{\Pi}}\left\|{d^{\pi^*}}/{d^\pi}\right\|_\infty,  D_\infty = \left\|{d^{\pi^*}}/{\nu}\right\|_\infty. $
\end{definition}

The above notion of the distribution mismatch coefficient is often useful to characterize the exploration problem faced by policy optimization algorithms. We now give the main result for the output of our RL boosting algorithm, assuming supervised weak learners.

\begin{theorem}
    \label{thm:MAIN1}
    Algorithm~\ref{alg:MAIN1} samples $T(MN+P)$ episodes of length $\tilde{O}(\frac{1}{1-\gamma})$ with probability $1-\delta$.\\
    In the \underline{episodic model}, with $\mu =d_0$, for $\eta_{1,t} = \min\{1,\frac{2C_\infty}{t}\}$,  $T=O\left(\frac{C^2_\infty}{(1-\gamma)^3 \varepsilon}\right)$, $N=\left(\frac{16|A|C_\infty}{(1-\gamma)^2\alpha\epsilon}\right)^2$, $M=m\left(\frac{(1-\gamma)^2\alpha\varepsilon}{C_\infty|A|},\frac{\delta}{NT}\right)$,   with probability $1-\delta$,
    $ V^* - V^\pi \leq  \frac{C_\infty\mathcal{E}}{1-\gamma} + \varepsilon.$\\
    In the \underline{$\nu$-reset model}, with $\mu=\nu$, for  $\eta_{1,t} = \sqrt{\frac{8\gamma (1-\gamma)^2}{|A|^2 T}}$, $T=\frac{8D^2_\infty}{(1-\gamma)^6 \varepsilon^2}$, $N=\left(\frac{16|A|D_\infty}{(1-\gamma)^3\alpha\epsilon}\right)^2$,  $M=m\left(\frac{(1-\gamma)^3\alpha\varepsilon}{8|A|D_\infty},\frac{\delta}{2NT}\right)$,  with probability $1-\delta$,
    $ V^* - V^\pi \leq  \frac{D_\infty\mathcal{E}_\nu}{(1-\gamma)^2} + \varepsilon.$\\
    \textbf{Sample complexities:} If $m(\varepsilon,\delta)=\frac{\log |\W|}{\varepsilon^2}\log \frac{1}{\delta}$ for some measure of weak learning complexity $|\W|$, the algorithm samples $\tilde{O}\left(\frac{C_\infty^6 |A|^4\log |\W|}{(1-\gamma)^{11} \alpha^4\varepsilon^5}\right)$ episodes in the episodic model, and $\tilde{O}\left(\frac{D_\infty^6|A|^4\log |\W|}{(1-\gamma)^{18} \alpha^4\varepsilon^6}\right)$ in the $\nu$-reset model.
\end{theorem}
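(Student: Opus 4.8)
The plan is to decompose the analysis into two nested layers mirroring the algorithm's structure: an outer Frank-Wolfe loop over policies and an inner boosting loop that implements an approximate linear-optimization oracle. The central quantity connecting the layers is the performance-difference lemma, which expresses $V^* - V^{\pi_{t-1}}$ in terms of an expectation, under $d^{\pi^*}$, of the advantage $A^{\pi_{t-1}}(s,a) = Q^{\pi_{t-1}}(s,a) - V^{\pi_{t-1}}(s)$. First I would establish that the inner loop, lines 4--9, is exactly an instance of online-convex-optimization boosting with a multiplicative-$\alpha$ weak learner applied to the smoothed/extended linearized objective $f \mapsto \E_{s\sim d^{\pi_{t-1}}_\mu}[\,\widehat{Q}_{t-1}(s,\cdot)^\top f(s)\,]$; invoking the guarantee from \cite{hazan2021boosting} on $F_{G,\beta}$, after $N = \tilde O((|A| C_\infty/((1-\gamma)^2\alpha\varepsilon))^2)$ rounds the projected iterate $\pi'_t = \Gamma[\rho_{t,N}]$ satisfies, up to the smoothing bias controlled by $\beta$ and the policy-completeness error $\mathcal{E}_\mu(\mathbbl{\Pi},\Pi)$, the near-stationarity bound $\E_{s\sim d^{\pi_{t-1}}_\mu}[Q^{\pi_{t-1}}(s,\cdot)^\top(\pi'_t(\cdot|s) - \pi_{t-1}(\cdot|s))] \ge \max_{\pi^*}\E_{s\sim d^{\pi_{t-1}}_\mu}[Q^{\pi_{t-1}}(s,\cdot)^\top(\pi^*(\cdot|s)-\pi_{t-1}(\cdot|s))] - (\text{inner error})$. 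The Q-estimates $\widehat{Q}_i$ produced by Algorithm~\ref{alg:q_sampler} (the rollout sampler) are handled by a Hoeffding/union bound over the $NT$ invocations, which forces the $M = m(\cdot,\delta/(NT))$ choice and the episode length $\tfrac{1}{1-\gamma}\log\tfrac{T(MN+P)}{\delta}$ via geometric trajectory truncation.

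\textbf{Next} I would run the outer Frank-Wolfe recursion. The value function $V^{\pi}_\mu$ is $\tfrac{1}{(1-\gamma)}$-smooth-like along the line segment $\pi_{t-1}\to(1-\eta)\pi_{t-1}+\eta\pi'_t$ (quantitatively, the standard $\tfrac{2}{(1-\gamma)^3}$ or $\tfrac{2\gamma}{(1-\gamma)^2}$ bound from policy-gradient smoothness, e.g.\ \cite{agarwal2019theory}), so the descent lemma gives $V^{\pi_t}_\mu \ge V^{\pi_{t-1}}_\mu + \eta_{1,t}\,\langle\text{linear term}\rangle - \tfrac{L}{2}\eta_{1,t}^2$. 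Here I would invoke the gradient-domination property of $V^\pi$ established in \cite{agarwal2019theory,agarwal2020pc}: the linear term lower-bounds $(1-\gamma)(V^* - V^{\pi_{t-1}})$ modulo the distribution-mismatch factor ($C_\infty(\mathbbl{\Pi})$ in the $d_0$ case, $D_\infty$ with the \texttt{StepChooser} step in the $\nu$-reset case) and modulo $\mathcal{E}_\mu(\mathbbl{\Pi},\Pi)/(1-\gamma)$. Telescoping the resulting recursion $\Delta_t \le (1-\tfrac{\eta_{1,t}}{\text{mismatch}})\Delta_{t-1} + O(\eta_{1,t}^2 L) + \eta_{1,t}\cdot(\text{inner error})$ with the prescribed $\eta_{1,t}=\min\{1,\tfrac{2C_\infty}{t}\}$ yields, after $T$ iterations, $V^* - V^{\pi_T} \le C_\infty(\mathbbl{\Pi})\tfrac{\mathcal{E}(\mathbbl{\Pi},\Pi)}{1-\gamma} + O(\tfrac{C_\infty^2}{(1-\gamma)^3 T}) + (\text{inner error})$; plugging $T = \tfrac{16C_\infty^2}{(1-\gamma)^3\varepsilon}$ and balancing the inner error against $\varepsilon$ (which back-propagates to the stated $N$, $M$, and — in the reset model — $P$) closes the episodic and $\nu$-reset bounds.

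\textbf{Finally}, for the $\nu$-reset model I would separately account for the \texttt{StepChooser} subroutine: it estimates, by $P$ rollouts, the largest $\eta$ along which the improvement is certified, and outputting $\pi_{t-1}$ with the smallest $\eta_t$ is the standard non-convex-Frank-Wolfe "best-iterate" trick giving a $1/\sqrt{T}$-type rate (hence $T = \tfrac{8D_\infty^2}{(1-\gamma)^6\varepsilon^2}$ and the extra $\varepsilon$-power). The total episode count is then $T(MN+P)$ with $m(\varepsilon,\delta) = \tfrac{\log|\W|}{\varepsilon^2}\log\tfrac1\delta$ substituted and all polynomial $|A|,(1-\gamma)^{-1}$ factors collected, yielding the advertised $\tilde O(C_\infty^6|A|^4\log|\W|/((1-\gamma)^{11}\alpha^4\varepsilon^5))$ and $\tilde O(D_\infty^6|A|^4\log|\W|/((1-\gamma)^{18}\alpha^4\varepsilon^6))$. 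The main obstacle I anticipate is the error-composition bookkeeping across the two loops: the inner boosting operates on \emph{estimated} $Q$-functions over the \emph{smoothed, extended} objective $F_{G,\beta}$, so one must carefully show that (i) the smoothing bias $O(\beta|A|)$, (ii) the extension's effect near $\partial\Delta_A$, (iii) the multiplicative-$\alpha$ slack, and (iv) the statistical error in $\widehat{Q}$ all combine into a single additive term that the outer $1/T$ (resp.\ $1/\sqrt T$) rate can absorb — and that the "natural distribution" restriction in Assumption~\ref{assum:wl} is genuinely satisfied because every $d^{\pi_{t-1}}_\mu$ with $\pi_{t-1}\in\mathbbl{\Pi}$ is, by induction, a legal query to the weak learner.
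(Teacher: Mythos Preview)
Your proposal is correct and follows essentially the same approach as the paper: an inner boosting loop (via the \cite{hazan2021boosting} guarantee on $F_{G,\beta}$) that furnishes an approximate linear-optimization oracle over $\nabla V^{\pi_{t-1}}_\mu$, wrapped in an outer non-convex Frank--Wolfe loop whose convergence is driven by smoothness ($\tfrac{2\gamma}{(1-\gamma)^3}$ in the $\|\cdot\|_{\infty,1}$ norm) together with gradient domination in the episodic case and the best-iterate $1/\sqrt{T}$ rate plus \texttt{StepChooser} in the $\nu$-reset case. The only organizational difference is bookkeeping: the paper keeps the inner oracle competing against $\Pi$ only and lets policy completeness $\mathcal{E}_\mu(\mathbbl{\Pi},\Pi)$ enter as the additive slack $\tau$ in the gradient-domination statement, whereas you fold it directly into the inner error; both are equivalent and yield the same final bounds.
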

Theorem \ref{thm:MAIN1} above pertains to the case where a weak learning algorithm is available. 
However, another main result is given by considering the simpler approach of reduction of RL to a \textit{strong} supervised learning algorithm. In particular, when running our main boosting algorithm, we can replace the call to Internal Boost (in Line 4 of Algorithm \ref{alg:MAIN1}) with a call to a \textit{strong} supervised learning algorithm. By a similar analysis to that of Theorem \ref{thm:MAIN1} we obtain the following corollary. 

\begin{corollary} \label{thm:MAIN3}
Let $m(\varepsilon,\delta)=\frac{\log |\W|}{\varepsilon^2}\log \frac{1}{\delta}$ for some measure of weak learning complexity $|\W|$.
When run with a supervised learning oracle (Definition \ref{def:wl} with $\alpha=1$, i.e. $N=1$) as the Internal boosting, Algorithm~\ref{alg:MAIN1} samples $\tilde{O}\left(\frac{C_\infty^3 \log |\W|}{ \varepsilon^3}\right)$ episodes in the episodic model, and $\tilde{O}\left(\frac{D_\infty^4\log |\W|}{ \varepsilon^4}\right)$ in the $\nu$-reset model, to guarantee $V^* - V^\pi \leq  \frac{C_\infty\mathcal{E}}{1-\gamma} + \varepsilon$ with probability $1-\delta$ in the episodic model and $V^* - V^\pi \leq  \frac{D_\infty\mathcal{E}_\nu}{(1-\gamma)^2} + \varepsilon$ in the $\nu$-reset model.
\end{corollary} 
 
We note that this result is an improvement over previous results in terms of sample complexity requirement of the algorithm. In particular, in \cite{kakade2002approximately}, Theorem 4.4 and Corollary 4.5 achieve the same guarantee using $O(1/\varepsilon^4)$ samples regardless of the MDP access model. Briefly, CPI utilizes $1/\varepsilon^2$ calls to an $\varepsilon$-optimal supervised learning oracle (each call needing $1/\varepsilon^2$ samples) to reach a $\varepsilon$-local optima of the value function. Under requisite state coverage assumptions, this translates to $\varepsilon$-function value suboptimality. Indeed, such mode of analysis via first arguing for convergence to a local optima for the CPI algorithm can be shown to be tight. The improvement in our case for the episodic access model comes from the insight that it is possible to make direct claims on the function value sub-optimality (second part of Theorem \ref{fwthm}), bypassing the need for making a claim on the local optimality, in the gradient-dominated case.

\subsection{Trajectory sampler}

In Algorithm \ref{alg:q_sampler} we describe an episodic sampling procedure, that is used in our sample-based RL boosting algorithms described above. For a fixed initial state distribution $\mu$, and any given policy $\pi$, we apply the following sampling procedure: start at an initial state $s_0\sim \mu$, and continue
to act thereafter in the MDP according to any policy $\pi$, until termination. With this process, it is straightforward
to both sample from the state visitation distribution $s \sim d^{\pi}$, and to obtain unbiased samples of $Q^{\pi}(s,\cdot)$; see
Algorithm~\ref{alg:q_sampler} for the detailed process.
    
\section{Sketch of the analysis}
{\bf Non-convex Frank-Wolfe.} We give an abstract high-level procedural template that the previously introduced RL boosters operate in. This is based on a variant of the Frank-Wolfe optimization technique \cite{frank1956algorithm}, adapted to non-convex and gradient dominated function classes (see Definition \ref{def:grad_dom}). The Frank-Wolfe (FW) method assumes oracle access to a black-box linear optimizer, denoted $\mathcal{O}$, and utilizes it by iteratively making oracle calls with modified objectives, in order to solve the harder task of convex optimization. Analogously, boosting algorithms often assume oracle access to a ”weak” learner, which are utilized by iteratively making oracle calls with modified objective, in order to obtain a ”strong” learner, with boosted performance. In the RL setting, the objective is in fact non-convex, but exhibits gradient domination. By adapting Frank-Wolfe technique to this setting, we will in subsequent section obtain guarantees for the algorithms given in Section \ref{sec:main}.{\bf Oracle:} Denote by $\O$ a black-box oracle to an $(\epsilon_0,\K_2)$-approximate linear optimizer over a convex set $\K \subseteq \mathbb{R}^d$ such that for any given $v\in \reals^d$, we have
$v^\top \O(v) \geq \max_{u \in \K_2} v^\top u - \epsilon_0.$
\begin{algorithm}
    \caption{Non-convex Frank-Wolfe}
    \begin{algorithmic}[1] \label{fw2} 
        \STATE Input: $T > 0$, objective $f$, linear optimizer $\O$, rate $\eta_t$.
        \STATE Choose $x_0 \in \K$ arbitrarily.
        \FOR{$t = 1, \ldots, T$}
        \STATE Call $ z_t =  \O(\nabla_{t-1}) $, where $\nabla_{t-1} = \nabla f(x_{t-1})$.
         Set $x_{t} = (1 - \eta_t) x_{t-1} + \eta_t z_t$.
        \ENDFOR
        \RETURN $\bar{x}:=x_{t'}$ where $t'=\argmin_t \nabla_{t}^\top (z_t - x_t)$. 
    \end{algorithmic}
\end{algorithm}

\begin{theorem}
    \label{fwthm}
    Let $f:\K\to\reals$ be $L$-smooth in some norm $\|\cdot\|_*$, bounded for all $x \in \K$, $|f(x)| \le H$ for some $H>0$, and let the diameter of $\K$ in $\|\cdot\|_*$ be $D$. Then, for a $(\epsilon_0,\K_2)$-linear optimization oracle $\mathcal{O}$, and $\eta_t  = \eta =  \sqrt{\frac{4H}{LD^2 T}}$, the output $\bar{x}$ of Algorithm \ref{fw2} satisfies
    $$  \max_{u\in \K_2}\nabla f(\bar{x})^\top ( u - \bar{x})  \leq   \sqrt{\frac{2HLD^2}{T}}+\epsilon_0 ;  \max_{x^*\in \K}f(x^*) - f(\bar{x})   \leq   \frac{2 \kappa^2\max\{L D^2,H\}}{T} + \tau+\kappa\epsilon_0 $$
    Furthermore, if $f$ is $(\kappa,\tau,\K_1,\K_2)$-locally gradient-dominated and $x_0,\dots x_T\in \K_1$, then the output $\bar{x}$ of Algorithm \ref{fw2} where
    $\eta_t = \min\{1,\frac{2\kappa}{t}\}$
     satisfies the bound on the right.
\end{theorem}

 We sketch the high-level ideas of the proof of our main result, stated in Theorem~\ref{thm:MAIN1}, and refer the reader to the appendix for the formal proof.  We will establish an equivalence between RL Boosting (Algorithm \ref{alg:MAIN1}) and the variant of the Frank-Wolfe algorithm (Algorithm~\ref{fw2}). This abstraction allows us to obtain the novel convergence guarantees given in Theorem \ref{thm:MAIN1}. Throughout the analysis, we use the notation $\nabla_\pi V^\pi$ to denote the gradient of the value function with respect to the $|S|\times|A|$-sized representation of the policy $\pi$, namely the functional gradient of $V^\pi$.


{\bf Internal-boosting weak learners.}
The Frank-Wolfe algorithm utilizes an inner gradient optimization oracle as a subroutine. To implement this oracle using approximate optimizers, we utilize yet another variant of the FW method as ``internal-boosting'' for the weak learners, by employing an adapted analysis of \cite{hazan2021boosting} that is stated in Claim \ref{claim:oldstat} below. 
Let $\mathcal{D}_t$ be the distribution induced by the trajectory sampler in round $t$.

\begin{claim}
    \label{claim:oldstat}
    Let $\beta = \sqrt{{1}/{\alpha N}}$, $\eta_{2,n}=\min\{{2}/{n},1\}$. $\pi'_t$ produced by Algorithm~\ref{alg:MAIN1} satisfies
$     \max_{\pi\in\Pi}\mathbb{E}_{(s,Q)\sim \mathcal{D}_t} \left[ Q^\top \pi(s) \right]  
    - \mathbb{E}_{(s,Q)\sim \mathcal{D}_t} \left[ Q^\top \pi'_t(s) \right] \leq ({2|A|}/(1-\gamma)\alpha) \left(\varepsilon+ {2}/{\sqrt{N}}  \right).
$
\end{claim}

{\bf From weak learning to linear optimization,} Next, we give an important observation which allows us to re-state the guarantee in the previous subsection in terms of linear optimization over functional gradients. The key observation here is that the expensive  optimizing procedure for $(\nabla_{\pi}V^\pi)^\top \pi'$, which in particular requires iterating over all states in $S$, can be instead replaced with sampling from an appropriate distribution $\mathcal{D}$ (via Algorithm \ref{alg:q_sampler}). These sample pairs $(s, \widehat{Q^\pi}(s,\cdot))$ could then be fed to our weak learning algorithm, which guarantees generalization. 
\begin{lemma}
    \label{lemma:q_sampler}
    Applying Algorithm \ref{alg:q_sampler} for any given policy $\pi$ yields an unbiased estimate of the gradient, such that for any $\pi'$,
        $
        (\nabla_{\pi}V_\mu^\pi)^\top \pi' = \E_{(s, \widehat{Q^\pi}(s,\cdot)) \sim \mathcal{D}}\Big[
        \widehat{Q^\pi}(s,\cdot)^\top  \pi'(\cdot|s)
        \Big]/{(1-\gamma)}$,
    where $\pi'(\cdot|s) \in \Delta_A$, $\mathcal{D}$ is the distribution induced on the outputs of Algorithm \ref{alg:q_sampler},  for the policy $\pi$ and initial  distribution $\mu$.
\end{lemma}
\section{Experiments}
The primary contribution of the present work is theoretical. Nevertheless, we empirically test our proposal with the experiment designed to elicit qualitative properties of the proposed algorithm, instead of aiming to achieve the state-of-the-art. To validate our results, we check if the proposed algorithm is indeed capable of boosting the accuracy of concrete instantiations of weak learners. We use depth-3 decision trees, with the implementation adapted from Scikit-Learn \cite{pedregosa2011scikit}, as our base weak learner. This choice of weak learner is particularly suitable for boosting, because it is an impoverished policy class in a representational sense and hence it is reasonable to expect that it may do only slightly better than random guessing with respect to the classification loss.
We consider the performance of the boosting algorithm (Algorithm 1) across multiple rounds of boosting or number of weak learners to that of supervised-learning-based policy iteration; the computational burden of the algorithm scales linearly with the latter. Throughout all the experiments, we used $\eta=0.9$. To speed up computation, the plots below were generated by retaining the 3 most recent policies of every iteration in the policy mixture. We evaluated these on the CartPole and the LunarLander environments. The results demonstrate the proposed RL boosting algorithm succeeds in maximizing rewards while using few weak learners (equivalently, within a few rounds of boosting).

\begin{figure}[!h] \label{fig:plots_ci}
     \centering
\includegraphics[scale=0.4]{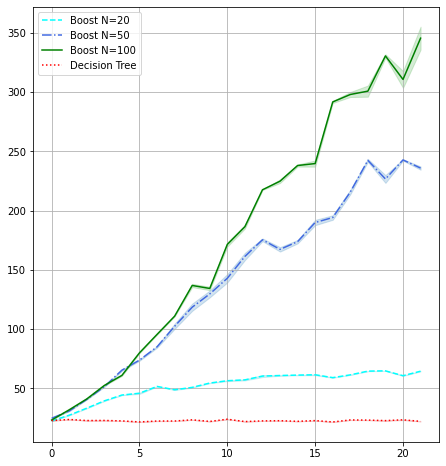}
 \includegraphics[scale=0.4]{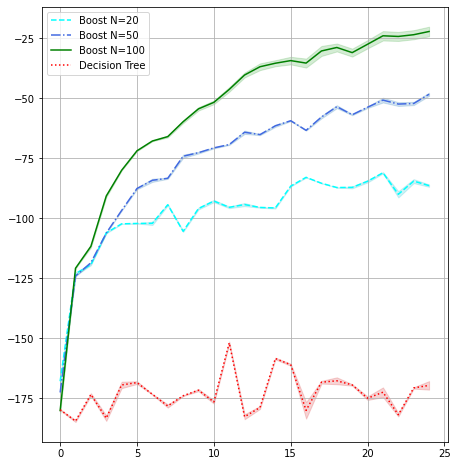}
    \caption{Reward trajectory for the CartPole (left) and the LunarLander (right) environments
    of the proposed boosting algorithm for $N=20,50,100$ number of base weak learners is compared to supervised-learning-based policy iteration (decision tree) above. The x-axis corresponds to $T$ number of iterations, and for each $t \in [T]$, reward is computed over $100$ episodes of interactions. The confidence interval is plotted over 3 such runs.}
\end{figure}

\section{Conclusions}
Building on recent advances in boosting for online convex optimization and bandits, we have described a boosting algorithm for reinforcement learning over large state spaces with provable guarantees. We see this as a first attempt at bringing a tried-and-tested methodology from supervised learning to RL. 

Many avenues of future research arise from the present work. Some of these are listed below.
\begin{itemize}
	\item Can the methodology of boosting be extended to other sequential decision making and interactive learning problems? Such a possibility for off-policy learning was recently explored following the release of a previous version of the present work in \cite{london2022boosted}, albeit under the requirement of a stronger weak learner.
	\item Our results underscore the difference between online and statistical weak learners in terms of the final sample complexity each achieves. Is there a fundamental difference between these weak learning models in the context of reinforcement learning? Can the difference between these sample complexities be minimized using a different analysis or choice of boosting algorithm?
	\item The two-level boosting scheme utilizes Conservative Policy Iteration \cite{kakade2002approximately} for the outer loop, for which we also present an improved analysis under stronger state coverage assumptions. Recently, the work of \cite{agarwal2022variance} introduced a modification of CPI that works with asymptotically fewer samples, when reducing RL to supervised learning problems. Can such techniques be leveraged to reduce the sample requirement for reductions to weak learning?
	\item The weak learner considered in the present work optimizes a linear function over policy space. Another natural weak learner would be an RL agent with multiplicatively approximate optimality guarantee, and it would be interesting to extend our methodology to this notion of weak learnability.
	\item Another important aspect that is not discussed in this work is that of state-space exploration. Potentially boosting can be combined with state-space exploration techniques to give stronger guarantees independent or only weakly dependent of distribution mismatch $C_\infty,D_\infty$ factors.
	\item A feature of our proposal is that it produces twice-aggregated nonlinear combinations of weak learners. Are simpler aggregations with provable guarantees possible?
\end{itemize}


%


\bibliography{paper_icml/bib}
\bibliographystyle{plain}

\newpage 
\onecolumn
\appendix

\section{Notation: List of Symbols}



\subsection*{Weak Learning and Boosting}

\begingroup
\renewcommand{\arraystretch}{1.5}
\setlength{\tabcolsep}{10pt} 

\begin{tabular}{ll}
$\alpha$ & Weak learning parameter\\
$T$ & Number of boosting iterations \\
$N$ & Number of internal-boosting iterations\\
$M$ & Number of internal-boosting episodes\\
$\Gamma[\cdot]$ & Policy projection \\
$\Pi$ & Policy class\\
$\mathbbl{\Pi}$ & Policy-Tree class (w.r.t $\Pi$, $\Gamma$, $N$ and $T$)\\
\end{tabular}  
\endgroup

\subsection*{Markov Decision Process}

\begingroup
\renewcommand{\arraystretch}{1.5}
\setlength{\tabcolsep}{10pt} 

\begin{tabular}{ll}
$S$ & State space\\
$A$ & Action space \\
$\Delta_A$ & Probability simplex over actions \\
$Q^\pi(s,a)$ & Q function \\
$V^\pi(s)$ & Value function \\
$d(s_0)$ & Initial state distribution\\
$d_d^\pi(s)$ & State-visitation distribution w.r.t $\pi,d$\\
$\gamma$ & Discount factor\\
$\mathcal{E}_\mu(\mathbbl{\Pi},\Pi)$ & Policy completeness\\
$\mu, \nu$ & Used for different initial state distributions \\ 
$C_\infty$ & Distribution mismatch  if $\mu = d_0$\\
$D_\infty$ & Distribution mismatch  if $\mu = \nu \neq d_0$\\
\end{tabular}  
\endgroup

\subsection*{Optimization}

\begingroup
\renewcommand{\arraystretch}{1.5}
\setlength{\tabcolsep}{10pt} 
\begin{tabular}{ll}
$\K$ & Decision set \\
$L$ & Smoothness of the objective \\
$H$ & Upper bound on the range of function value \\
$D$ & Upper bound on Euclidean diameter \\
\end{tabular}
\endgroup



\section{Appendix}

It is important that the policy that the boosting algorithm outputs can be evaluated efficiently. Towards that end, we give the following claim.

\begin{claim} \label{claim:eff_eval}
    For any $\pi \in \mathbbl{\Pi}(\Pi, N, T)$, $\pi(\cdot|s)$ for any $s\in S$ can be evaluated using $TN$ base policy evaluations and $O(T\times (NA + A\log A))$ arithmetic and logical operations.
\end{claim}
\begin{proof}
    Since $\pi \in \mathbbl{\Pi}(\Pi, N, T)$, it is composed of $TN$ base policies.
    Producing each aggregated function takes $NA$ additions and multiplications; there are $T$ of these.
    Each projection takes time equivalent to sorting $|A|$ numbers, due to a water-filling algorithm~\citep{duchi2008efficient}; these are also $T$ in number.
    The final linear transformation takes an additional $TA$ operations.
\end{proof}



     \section{RL Boosting via Weak Online Learning}

The second model of weak learning we consider requires a stronger assumption, but will give us better sample and oracle complexity bounds henceforth.

\begin{definition}[Weak Online Learner]
    \label{def:owl}
    Let $\alpha \in (0,1)$.
    Consider a class $\mathcal{L}$ of linear loss functions
    $\ell:\reals^A \to \reals$.
    A weak online learning algorithm, for every $M>0$, incrementally for each timestep computes a policy $\mathcal{W}_m\in\Pi$ and then observes the state-loss pair $(s,\ell_t)\in S\times \mathcal{L}$ such that
    \begin{align*}
        \sum_{m=1}^M \ell_m(\mathcal{W}_m(s_m)) \geq
        \alpha
        \max_{\pi^*\in\Pi}\sum_{m=1}^M\ell_m(\pi^*(s_m)) + (1-\alpha)
        \sum_{m=1}^M \ell_m(\pi_{Rand}(s_m)) - R_\mathcal{W}(M).
    \end{align*}
\end{definition}

\begin{assumption}[Weak Online Learning]
    \label{assum:owl}
    The booster has access to a weak online learning oracle (Definition \ref{def:owl}) over the policy class $\Pi$, for some $\alpha \in (0,1)$.
\end{assumption}

\begin{remark}
    A similar remark about \emph{natural} distributions applies to the online weak learner.
    In particular, it is sufficient the guarantee in \ref{def:owl} holds for arbitrary sequence of loss functions with high probability over the sampling of the state from $d^{\pi}$ for some $\pi\in\mathbbl{\Pi}$.
    Although stronger than supervised weak learning, this oracle can be interpreted as a relaxation of the online weak learning oracle considered in \citep{brukhim2020online,brukhim2021online,hazan2021boosting}.
    A similar model of hybrid adversarial-stochastic online learning was considered in \citep{rakhlin2011online,lazaric2009hybrid,beygelzimer2011contextual}.
    In particular, it is known \citep{lazaric2009hybrid} that unlike online learning, the capacity of a hypothesis class for this model is governed by its VC dimension (vs. Littlestone dimension).
\end{remark}

\begin{algorithm}[H]
    \caption{RL Boosting via Weak Online Learning}\label{alg:MAIN2}
    \begin{algorithmic}[1]
        \STATE Initialize a policy $\pi_0\in \Pi$ arbitrarily.
        \FOR{$t=1$ {\bfseries to} $T$}
        \STATE Initialize online weak learners $\W_1,\dots \W^N$.
        \FOR{$m=1$ {\bfseries to} $M$}
        \STATE Execute $\pi_{t-1}$ once with initial state distribution $\mu$ via Algorithm \ref{alg:q_sampler}, to get $(s_{t,m}, \widehat{Q}_{t,m})$.
        \STATE Choose $\Tilde{\pi}_{t,m,0}\in\Pi$ arbitrarily.
        \FOR{$n=1$ {\bfseries to} $N$}
        \STATE Set $\Tilde{\pi}_{t,m,n} = (1-\eta_{2,n})\Tilde{\pi}_{t,m,n-1} + \frac{\eta_{2,n}}{\alpha} \W^{n}$.
        \ENDFOR
        \STATE Pass to each $\W^n$ the following loss linear $f_{t,m,n}$:
        $$ f_{t,m,n} =  \frac{1}{\beta} \left(y_{t,m,n} - \Tilde{\pi}_{t,m,n}(\cdot|s_i) \right).$$
        where  $G= \frac{A}{1-\gamma}, \beta = \frac{2\gamma}{(1-\gamma)^3} $ and $f_i, \widehat{Q}_i \in \reals^{|A|}$
        \begin{align*}
        y_i = \argmin_{y\in \Delta_A}\Large\{ -\widehat{Q}_{t,m}^\top y + G\min_{z\in \Delta_A}\|z-y\|
        +\frac{\|\Tilde{\pi}_{t,m,n}(\cdot|s_{t,m})-y\|^2}{2\beta} \Large\}
        \end{align*}
        \ENDFOR
        \STATE Declare $\pi'_{t} = \frac{1}{M}\sum_{m=1}^M\Gamma\left[\Tilde{\pi}_{t,m,N}\right]$.
        \STATE Choose $\eta_{1,t} = \min\{1,\frac{2C_\infty}{t}\}$ if $\mu=d_0$ else set $\eta_{1,t} =  \sqrt{\frac{8\gamma (1-\gamma)^2}{|A|^2 T}}$.
        \STATE Update $\pi_t = (1-\eta_{1,t})\pi_{t-1} + \eta_{1,t} \pi'_t$.
        \ENDFOR
        \STATE Run each policy $\pi_t$ for $P$ rollouts to compute an empirical estimate $\widehat{V^{\pi_t}}$  of the expected return.
    \RETURN $\bar{\pi} := \pi_{t'}$ where $t'=\argmax_t \widehat{V^{\pi_t}}$.
    \end{algorithmic}
\end{algorithm}

\begin{theorem}
    \label{thm:MAIN2}
    Algorithm~\ref{alg:MAIN2} samples $TM$ episodes of length $\frac{1}{1-\gamma}\log \frac{TM}{\delta}$ with probability $1-\delta$.
    In the episodic model, Algorithm~\ref{alg:MAIN2} guarantees as long as $T=\frac{16C^2_\infty}{(1-\gamma)^3 \varepsilon}$, $N=\left(\frac{16|A|C_\infty }{(1-\gamma)^2\alpha\epsilon}\right)^2$, $M= \max\left\{ \frac{1000|A|^2C^2_\infty }{(1-\gamma)^4\varepsilon^2\alpha^2}\log^2{T}{\delta}, \frac{8|A|C_\infty R_\W(M)}{(1-\gamma)^2\alpha\varepsilon}\right\}$,$\mu=d_0$, we have with probability $1-\delta$
    $$ V^* - V^\pi \leq C_\infty  \frac{\mathcal{E}(\mathbbl{\Pi}, \Pi)}{1-\gamma} +\varepsilon$$
    In the $\nu$-reset model, Algorithm~\ref{alg:MAIN1} guarantees as long as $T=\frac{100 D^2_\infty}{(1-\gamma)^6 \varepsilon^2}$, $N=\left(\frac{20|A|D_\infty}{(1-\gamma)^3\alpha\epsilon}\right)^2$,  $M=\max\left\{\left(\frac{40|A|D_\infty}{(1-\gamma)^3\alpha\varepsilon}\log\frac{T}{\delta}\right)^2, \frac{10|A|D_\infty R_\W(M)}{(1-\gamma)^3 \alpha\varepsilon}\right\}$,$\mu=\nu$, we have with probability $1-\delta$
    $$ V^* - V^\pi \leq  D_\infty\frac{\mathcal{E}_\nu(\mathbbl{\Pi}, \Pi)}{(1-\gamma)^2} + \varepsilon$$
    If $R_\W(M)=\sqrt{M\log |\W|}$ for some measure of weak learning complexity $|\W|$, the algorithm samples $\tilde{O}\left(\frac{C_\infty^4|A|^2\log |\W|}{(1-\gamma)^{7} \alpha^2\varepsilon^3}\right)$ episodes in the episodic model, and $\tilde{O}\left(\frac{D_\infty^4|A|^2\log |\W|}{(1-\gamma)^{12} \alpha^2\varepsilon^4}\right)$ in the $\nu$-reset model.
\end{theorem}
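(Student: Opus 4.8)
The plan is to follow the architecture of the proof of Theorem~\ref{thm:MAIN1}, replacing the supervised internal-boosting guarantee (Theorem~\ref{thm:oldstat}) by its online analogue and feeding the resulting approximate linear-optimization oracle into the same non-convex Frank-Wolfe outer loop. I would start with the truncation bookkeeping: each of the $T(M+P)$ invocations of Algorithm~\ref{alg:q_sampler} terminates with geometric rate $1-\gamma$, so cutting every trajectory at horizon $\frac{1}{1-\gamma}\log\frac{T(M+P)}{\delta}$ fails with probability at most $\delta/(T(M+P))$ per episode; a union bound yields the stated episode-length claim and lets the remaining analysis treat the sampler as exact up to a truncation bias folded into the $\varepsilon$ budget. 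Throughout I will also use that $\pi\mapsto V^\pi$, viewed on the $|S|\times|A|$ policy table, is $L$-smooth with $L=\poly(|A|,(1-\gamma)^{-1})$, and that $\widehat{Q}^\pi(s,\cdot)$ output by Algorithm~\ref{alg:q_sampler} is an unbiased estimate of $Q^\pi(s,\cdot)$ of norm $\frac{|A|}{1-\gamma}$ (Lemma~\ref{lemma:q_sampler}).

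For the inner double loop, the key difference from Algorithm~\ref{alg:MAIN1} is that the $N$ persistent online weak learners $\W^1,\dots,\W^N$ each receive a single linear loss $f_{t,m,n}$ per outer index $m$, built from the one sample $(s_{t,m},\widehat{Q}_{t,m})$ and the running iterate $\rho_{t,m,n}$ through the extension/smoothing operators $F_{G,\beta},M_\beta$ exactly as in \cite{hazan2021boosting}. I would invoke the online boosting-with-a-multiplicative-weak-learner analysis of \cite{hazan2021boosting} --- the online counterpart of Theorem~\ref{thm:oldstat} --- which after dividing the regret by $M$ shows $\pi'_t=\frac1M\sum_m\Gamma[\rho_{t,m,N}]$ competes with every $\pi\in\Pi$ on the empirical objective up to $\frac{2|A|}{(1-\gamma)\alpha}\big(\frac{2}{\sqrt N}+\frac{R_\W(M)}{M}\big)$, and then add a Freedman/Azuma concentration step over the i.i.d.\ draws $s_{t,m}\sim d^{\pi_{t-1}}$ (conditioning carefully because $f_{t,m,n}$ depends on $\rho_{t,m,n}$) to pass to the population objective $\mathcal{D}_t$ at the price of a further $\tO\!\big(\frac{|A|}{(1-\gamma)\alpha}\sqrt{\log(NT/\delta)/M}\big)$ term. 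Combining with the gradient identity of Lemma~\ref{lemma:q_sampler} re-expresses this as: $\pi'_t$ approximately maximizes $(\nabla_\pi V^{\pi_{t-1}})^\top\pi'$ over $\pi'\in\Pi$ with error $\eps_{\mathrm{in}}$ of the stated order. Finally, to upgrade from $\Pi$ to the reference class $\mathbbl{\Pi}$, I would use policy completeness: for any $\pi\in\mathbbl{\Pi}$ there is $\pi^*\in\Pi$ matching $\max_a Q^{\pi_{t-1}}(s,a)$ in $d^{\pi_{t-1}}$-average up to $\mathcal{E}$, so the $\Pi$-competitive direction becomes $\mathbbl{\Pi}$-competitive after an additive $\mathcal{E}/(1-\gamma)$ loss --- the source of the irreducible $C_\infty(\mathbbl{\Pi})\mathcal{E}/(1-\gamma)$ (resp.\ $D_\infty\mathcal{E}_\nu/(1-\gamma)^2$) term.

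With an $\eps_{\mathrm{in}}$-approximate Frank-Wolfe direction against $\mathbbl{\Pi}$ in hand, the outer loop splits into two modes. In the episodic mode ($\mu=d_0$), using $\eta_{1,t}=\min\{1,2C_\infty(\mathbbl{\Pi})/t\}$, the smoothness descent inequality together with the fact that $V^\pi$ is gradient dominated (Definition~\ref{def:grad_dom}) with $\kappa\asymp C_\infty(\mathbbl{\Pi})$ and $\tau\asymp \frac{C_\infty(\mathbbl{\Pi})\mathcal{E}}{1-\gamma}+\frac{\eps_{\mathrm{in}}}{1-\gamma}$ gives, by the standard Frank-Wolfe induction, $V^*-V^{\pi_T}\le\tau+\tO(C_\infty^2(\mathbbl{\Pi})L/((1-\gamma)T))$, and the stated $T,N,M$ drive the last term below $\varepsilon$. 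In the $\nu$-reset mode, \texttt{StepChooser} (Algorithm~\ref{alg:STEP}) returns the clipped smoothness-optimal step up to the estimation error of its $P$ samples, so selecting the iterate with smallest $\eta_t$ identifies one whose Frank-Wolfe gap is $\tO(\sqrt{L/T})$ plus the $P$-estimation error; local gradient domination around the visited iterates with coefficient $\asymp D_\infty$ (introducing the extra $(1-\gamma)^{-1}$) converts this near-stationarity into $V^*-V^\pi\le\frac{D_\infty\mathcal{E}_\nu}{(1-\gamma)^2}+\varepsilon$. A union bound over the $T$ outer rounds, $N$ learners, $M$ inner rounds and $P$ \texttt{StepChooser} samples, each at confidence $\delta/\poly(T,N,M,P)$, gives total failure probability $\le\delta$; for the corollary, substituting $R_\W(M)=\sqrt{M\log|\W|}$ turns the self-referential $M=\max\{\cdots,c\,R_\W(M)\}$ into $M=\max\{\cdots,c^2\log|\W|\}$, and multiplying out $T(M+P)$ while collecting powers of $|A|,(1-\gamma)^{-1},\alpha^{-1},\varepsilon^{-1}$ produces the claimed $\tO\!\big(\frac{C_\infty^4(\mathbbl{\Pi})|A|^2\log|\W|}{(1-\gamma)^7\alpha^2\varepsilon^3}\big)$ and $\tO\!\big(\frac{D_\infty^4|A|^2\log|\W|}{(1-\gamma)^{12}\alpha^2\varepsilon^4}\big)$.

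The main obstacle I anticipate is the inner-loop step: faithfully porting the statistical internal-boosting guarantee of Theorem~\ref{thm:oldstat} to the online model, where the $N$ weak learners are updated once per outer round on losses that themselves depend on the running boosting iterate $\rho_{t,m,n}$, and getting the concentration right so that an $R_\W(M)$ \emph{regret} bound --- rather than an $m(\varepsilon,\delta)$ \emph{sample} bound --- is what drives the inner error to zero; in particular one must verify that the hybrid stochastic-adversarial structure (adversarial loss sequence, stochastic states drawn from $d^{\pi_{t-1}}$) is compatible with the boosting reduction. A secondary difficulty is ensuring the noisy \texttt{StepChooser} estimate is accurate enough that the ``smallest-$\eta_t$'' output rule genuinely returns an approximately stationary point of the non-convex outer objective, so that local gradient domination can be applied --- this requires the $P$-sample bound to be tuned against the Frank-Wolfe gap tolerance.
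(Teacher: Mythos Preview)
Your proposal is correct and follows essentially the same architecture as the paper's proof: invoke the online-boosting regret bound from \cite{hazan2021boosting} for the inner loop, apply an online-to-batch conversion (the paper cites a generic result, Theorem~9.5 of \cite{hazan2019introduction}, where you speak of Freedman/Azuma) to obtain Claim~\ref{thm:newoco}, translate to a functional-gradient oracle via Lemma~\ref{lemma:q_sampler}, and then feed this into the non-convex Frank-Wolfe template (Theorem~\ref{fwthm}) together with the gradient-domination Lemmas~\ref{lem:dom} and~\ref{lem:domnu} and the step-size Claim~\ref{cl:eta}. The only cosmetic difference is bookkeeping: you absorb the policy-completeness error into the oracle suboptimality (``upgrade the $\Pi$-competitive direction''), whereas the paper keeps the oracle competitive only against $\K_2=\Pi$ and places $\mathcal{E}$ in the $\tau$ parameter of gradient domination; the two arrangements yield the same final inequality, though note the relevant upgrade is from $\Pi$ to the class of \emph{all} policies (via the greedy policy), not to $\mathbbl{\Pi}$.
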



\section{Analysis for Boosting with Weak Supervised Learning (Proof of Theorem~\ref{thm:MAIN1})}

\begin{theorem*}[Formal version of Theorem~\ref{thm:MAIN1}]
    Algorithm~\ref{alg:MAIN1} samples $TMN$ episodes of length $\frac{1}{1-\gamma}\log \frac{TMN}{\delta}$ with probability $1-\delta$.
    In the episodic model, Algorithm~\ref{alg:MAIN1} guarantees as long as $T=\frac{16C^2_\infty)}{(1-\gamma)^3 \varepsilon}$, $N=\left(\frac{16|A|C_\infty}{(1-\gamma)^2\alpha\epsilon}\right)^2$, $M=m\left(\frac{(1-\gamma)^2\alpha\varepsilon}{8C_\infty|A|},\frac{\delta}{NT}\right)$,$\mu=d_0$, we have with probability $1-\delta$
    $$ V^* - V^\pi \leq  C_\infty\frac{\mathcal{E}(\mathbbl{\Pi}, \Pi)}{1-\gamma} + \varepsilon$$
    In the $\nu$-reset model, Algorithm~\ref{alg:MAIN1} guarantees as long as $T=\frac{8D^2_\infty}{(1-\gamma)^6 \varepsilon^2}$, $N=\left(\frac{16|A|D_\infty}{(1-\gamma)^3\alpha\epsilon}\right)^2$,  $M=m\left(\frac{(1-\gamma)^3\alpha\varepsilon}{8|A|D_\infty},\frac{\delta}{2NT}\right)$,$\mu=\nu$, we have with probability $1-\delta$
    $$ V^* - V^\pi \leq  D_\infty\frac{\mathcal{E}_\nu(\mathbbl{\Pi}, \Pi)}{(1-\gamma)^2} + \varepsilon$$
    If $m(\varepsilon,\delta)=\frac{\log |\W|}{\varepsilon^2}\log \frac{1}{\delta}$ for some measure of weak learning complexity $|\W|$, the algorithm samples $\tilde{O}\left(\frac{C_\infty^6 |A|^4\log |\W|}{(1-\gamma)^{11} \alpha^4\varepsilon^5}\right)$ episodes in the episodic model, and $\tilde{O}\left(\frac{D_\infty^6|A|^4\log |\W|}{(1-\gamma)^{18} \alpha^4\varepsilon^6}\right)$ in the $\nu$-reset model.
\end{theorem*}
\begin{proof}[Proof of Theorem~\ref{thm:MAIN1}]
    The broad scheme here is to utilize an equivalence between Algorithm~\ref{alg:MAIN1} and Algorithm~\ref{fw2} on the function $V^\pi$ (or $V^\pi_\nu$ in the $\nu$-reset model), to which Theorem~\ref{fwthm} applies.

    To this end, firstly, note $V^\pi$ is $\frac{1}{1-\gamma}$-bounded. Define a norm $\|\cdot\|_{\infty,1}:\reals^{|S|\times |A|}\to \reals$ as $\|x\|_{1,\infty} = \max_{s\in S} \sum_{a\in A} |x_{s,a}|$. Further, observe that for any policy $\pi:S\to \Delta_A$, $\|\pi\|_{\infty,1}=1$. The following lemma specifies the smoothness of $V^\pi$ in this norm.
    \begin{lemma}
        \label{lem:smoothness}
        $V^\pi$ is $\frac{2\gamma}{(1-\gamma)^3}$-smooth in the $\|\cdot\|_{\infty,1}$ norm.
    \end{lemma}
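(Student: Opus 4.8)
The plan is to write $V^\pi_\mu$ in closed resolvent form as a function of the direct (tabular) parameterization of $\pi$, differentiate twice along an arbitrary line, and bound the second directional derivative using the operator norm dual-paired with the $\|\cdot\|_{\infty,1}$ norm. Concretely, viewing $\pi$ as an element of $\reals^{|S|\times|A|}$, let $P_\pi\in\reals^{|S|\times|S|}$ denote the induced state-to-state transition matrix, $(P_\pi)_{s,s'}=\sum_a\pi(a|s)P(s'|s,a)$, and $r_\pi\in\reals^{|S|}$ the expected reward vector, $(r_\pi)_s=\sum_a\pi(a|s)r(s,a)$. Both are linear in the entries of $\pi$, and the Bellman equation gives the closed form $V^\pi_\mu=\mu^\top(I-\gamma P_\pi)^{-1}r_\pi$.

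First I would fix a policy $\pi$ and a direction $u\in\reals^{|S|\times|A|}$, set $\pi_\theta=\pi+\theta u$, and note that along the segment on which $\pi_\theta$ is a valid policy the matrix $P_{\pi_\theta}$ is row-stochastic, so $I-\gamma P_{\pi_\theta}$ is invertible and $M_\theta:=(I-\gamma P_{\pi_\theta})^{-1}$ satisfies $\|M_\theta\|_\infty\le\frac{1}{1-\gamma}$, where $\|\cdot\|_\infty$ is the matrix operator norm induced by the $\ell_\infty$ vector norm (maximum absolute row sum). Writing $\dot P$ and $\dot r$ for the (constant) derivatives of $P_{\pi_\theta}$ and $r_{\pi_\theta}$ in $\theta$, the resolvent identity $\frac{d}{d\theta}M_\theta=\gamma M_\theta\dot PM_\theta$ yields, after one more differentiation and using $\ddot r=0$,
$$\frac{d^2}{d\theta^2}V^{\pi_\theta}_\mu=\mu^\top\!\left(2\gamma^2 M_\theta\dot PM_\theta\dot PM_\theta\, r_{\pi_\theta}+2\gamma M_\theta\dot PM_\theta\,\dot r\right).$$

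Then I would establish the two elementary norm estimates $\|\dot P\|_\infty\le\|u\|_{\infty,1}$ and $\|\dot r\|_\infty\le\|u\|_{\infty,1}$: the $s$-th absolute row sum of $\dot P$ is $\sum_{s'}\bigl|\sum_a u(a|s)P(s'|s,a)\bigr|\le\sum_a|u(a|s)|\sum_{s'}P(s'|s,a)=\sum_a|u(a|s)|\le\|u\|_{\infty,1}$, and likewise $|\dot r_s|\le\sum_a|u(a|s)|\le\|u\|_{\infty,1}$ using $r\in[0,1]$. Combining these with $\|M_\theta\|_\infty\le\frac1{1-\gamma}$, $\|r_{\pi_\theta}\|_\infty\le1$, and $\|\mu\|_1=1$ (so that $|\mu^\top x|\le\|x\|_\infty$), the first term is at most $\frac{2\gamma^2}{(1-\gamma)^3}\|u\|_{\infty,1}^2$ and the second at most $\frac{2\gamma}{(1-\gamma)^2}\|u\|_{\infty,1}^2$; their sum telescopes to $\frac{2\gamma^2+2\gamma(1-\gamma)}{(1-\gamma)^3}\|u\|_{\infty,1}^2=\frac{2\gamma}{(1-\gamma)^3}\|u\|_{\infty,1}^2$. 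Finally, for any two policies $\pi,\pi'$, applying Taylor's theorem with integral remainder to $g(\theta)=V^{\pi+\theta(\pi'-\pi)}_\mu$ — whose second derivative is bounded as above with $u=\pi'-\pi$ uniformly for $\theta\in[0,1]$, since convex combinations of policies stay row-stochastic — gives $\bigl|V^{\pi'}_\mu-V^\pi_\mu-\nabla V^\pi_\mu{}^\top(\pi'-\pi)\bigr|\le\frac12\cdot\frac{2\gamma}{(1-\gamma)^3}\|\pi'-\pi\|_{\infty,1}^2$, which is exactly $L$-smoothness with $L=\frac{2\gamma}{(1-\gamma)^3}$ on the policy domain (which is all that is needed, as the Frank–Wolfe iterates are convex combinations of policies). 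The only real care points — not genuine obstacles — are correctly differentiating the resolvent twice and recognizing that the $\ell_\infty$ operator norm is the right one to pair with $\mu$ on the left and with the stochasticity of $P_\pi$, which is precisely what removes the spurious $|A|$ factor present in the $\ell_2$-norm version of this bound.
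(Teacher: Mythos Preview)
Your proof is correct and yields exactly the claimed constant, but it proceeds along a genuinely different route than the paper. The paper never writes down the resolvent $(I-\gamma P_\pi)^{-1}$ or a second derivative; instead it invokes the Performance Difference Lemma and the policy-gradient theorem to obtain the exact identity
\[
V^{\pi'}-V^\pi-\nabla V^\pi(\pi'-\pi)=\frac{1}{1-\gamma}\Bigl(\E_{s\sim d^{\pi'}}-\E_{s\sim d^{\pi}}\Bigr)\!\bigl[Q^\pi(\cdot|s)^\top(\pi'(\cdot|s)-\pi(\cdot|s))\bigr],
\]
bounds the bracketed term by $\tfrac{1}{1-\gamma}\|\pi'-\pi\|_{\infty,1}$ uniformly in $s$, and then separately proves $\|d^{\pi'}-d^\pi\|_1\le\tfrac{\gamma}{1-\gamma}\|\pi'-\pi\|_{\infty,1}$ via a telescoping bound on $\|(P^{\pi'})^t-(P^\pi)^t\|$ applied to $d_0$. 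Your approach is more mechanical and self-contained---pure matrix calculus plus Taylor's theorem, with no appeal to the PDL or to visitation-distribution continuity---and it makes transparent why the $\ell_\infty$ operator norm (hence row-stochasticity) is the right pairing that avoids an $|A|$ factor. The paper's approach, by contrast, is more ``RL-native'': it isolates the intermediate fact $\|d^{\pi'}-d^\pi\|_1\le\tfrac{\gamma}{1-\gamma}\|\pi'-\pi\|_{\infty,1}$, which is of independent interest, and it works directly at the level of finite differences without needing the line $\pi_\theta$ to stay inside the simplex (though, as you note, for the smoothness statement restricted to policies this is free).
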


    To be able to interpret Algorithm~\ref{alg:MAIN1} as an instantiation of the algorithmic template Algorithm~\ref{fw2} presents, we need to show that $\pi'_t$ (Line 3-10) serves as an approximate linear optimizer for $\nabla V^{\pi_{t-1}}$. This will imply that the iterates produced by the two algorithms coincide. 
    Indeed, Claim~\ref{thm:newstat} demonstrates that $\pi'_t$ serves a linear optimizer over gradients of the function $V^\pi$; the suboptimality specifies $\epsilon_0$.
    \begin{claim}
        \label{thm:newstat}
        Let $\beta = \sqrt{\frac{1}{\alpha N}}$, and $\eta_{2,n}=\min\{\frac{2}{n},1\}$. Then, for any $t$, $\pi'_t$ produced by Algorithm~\ref{alg:MAIN1} satisfies with probability $1-\delta$
        $$ \max_{\pi\in\Pi} (\nabla V^{\pi_{t-1}}_\mu)^\top (\pi  -  \pi'_t)  \leq \frac{2|A|}{(1-\gamma)^2\alpha} \left( \frac{2}{\sqrt{N}} + \varepsilon_W \right)  $$
    \end{claim}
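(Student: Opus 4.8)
The plan is to chain together Theorem~\ref{thm:oldstat} with the gradient identity of Lemma~\ref{lemma:q_sampler}, making sure the smoothing/extension operators $F_{G,\beta}$ do not spoil the correspondence. First I would recall that Theorem~\ref{thm:oldstat}, specialized to the decision set $\Delta_A$ with the vectors $\widehat{Q}^\pi(s,\cdot)$ of norm $\frac{|A|}{1-\gamma}$, guarantees that the policy $\pi'_t$ obtained from the inner Frank-Wolfe/boosting loop (Lines 3--10 of Algorithm~\ref{alg:MAIN1}) satisfies
$$ \max_{\pi\in\Pi}\E_{(s,Q)\sim \mathcal{D}_t}\left[Q^\top \pi(s)\right] - \E_{(s,Q)\sim \mathcal{D}_t}\left[Q^\top \pi'_t(s)\right] \leq \frac{2|A|}{(1-\gamma)\alpha}\left(\frac{2}{\sqrt N}+\varepsilon\right), $$
where $\mathcal{D}_t$ is the distribution of $(s,\widehat{Q}^{\pi_{t-1}}(s,\cdot))$ induced by the trajectory sampler (Algorithm~\ref{alg:q_sampler}) run on $\pi_{t-1}$ with initial distribution $\mu$. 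The appearance of the extension/smoothing operators $F_{G,\beta}$ in Line~6 is exactly what is needed to invoke that theorem as a black box: the dataset $D'_{t,n}$ feeds the weak learner the smoothed-extended gradient, which is precisely the construction under which \cite{hazan2021boosting} proves Theorem~\ref{thm:oldstat} handles a multiplicative ($\alpha$) weak learner. So this step is a citation plus bookkeeping of constants.

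Next I would convert the guarantee from ``linear optimization against $\E[Q^\top \pi(s)]$'' into ``linear optimization against the functional gradient $\nabla V^{\pi_{t-1}}_\mu$.'' This is exactly Lemma~\ref{lemma:q_sampler}: for any policy $\pi'$,
$$ (\nabla_\pi V^{\pi_{t-1}}_\mu)^\top \pi' = \frac{1}{1-\gamma}\,\E_{(s,\widehat{Q})\sim \mathcal{D}_t}\big[\widehat{Q}(s,\cdot)^\top \pi'(\cdot|s)\big]. $$
Applying this identity to both $\pi$ and $\pi'_t$ and subtracting, the left-hand side of the desired inequality becomes $\frac{1}{1-\gamma}$ times the left-hand side of Theorem~\ref{thm:oldstat}'s bound. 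Multiplying that bound by $\frac{1}{1-\gamma}$ gives
$$ \max_{\pi\in\Pi}(\nabla V^{\pi_{t-1}}_\mu)^\top(\pi-\pi'_t) \leq \frac{2|A|}{(1-\gamma)^2\alpha}\left(\frac{2}{\sqrt N}+\varepsilon_W\right), $$
which is the claim, with $\varepsilon_W$ playing the role of the weak-learner accuracy parameter $\varepsilon$ fed into Definition~\ref{def:wl}. I would also check that the choices $\beta=\sqrt{1/(\alpha N)}$ and $\eta_{2,n}=\min\{2/n,1\}$ quoted in the claim are exactly the settings required by Theorem~\ref{thm:oldstat}, so no re-derivation of the inner loop is needed.

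The main obstacle I anticipate is not any hard inequality but rather the verification that the gradient fed to the weak learner in Line~6 of the algorithm, namely $-\nabla F_{G,\beta}[-\widehat{Q}_i](\rho_{t,n}(\cdot|s_i))$, is the correct object for Theorem~\ref{thm:oldstat} to apply --- i.e. that running the inner loop on these extended-smoothed gradients still certifies a guarantee on the \emph{un-smoothed} linear objective $Q^\top\pi(s)$ over the true simplex $\Delta_A$. This requires recalling from \cite{hazan2021boosting,beck2017first} that $F_{G,\beta}$ agrees with the original function on $\Delta_A$ for appropriate $G$ (here $G$ of order $\|\widehat Q\| = |A|/(1-\gamma)$) and small enough $\beta$, so that optimizing the extended objective over the weak learner's (possibly unconstrained) outputs still yields near-optimality against competitors in $\Pi\subseteq S\to\Delta_A$. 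Once that correspondence is in hand, the rest is the two-line substitution above; a secondary, purely mechanical point is tracking the failure probability $\delta$ through the $NT$ invocations of the weak learner, which is already absorbed into the choice of $M=m(\cdot,\delta/(NT))$ in the theorem statement.
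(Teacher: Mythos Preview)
Your proposal is correct and follows essentially the same route as the paper: invoke Theorem~\ref{thm:oldstat} (the statistical boosting result from \cite{hazan2021boosting}) as a black box for the inner loop, then apply Lemma~\ref{lemma:q_sampler} to translate the $Q$-expectation bound into a functional-gradient bound, picking up the extra $1/(1-\gamma)$ factor. The paper additionally spells out the coordinate-shift identity $(\rho_{t,n}-\pi_r) = (1-\eta_{2,n})(\rho_{t,n-1}-\pi_r) + \frac{\eta_{2,n}}{\alpha}(\mathcal{A}_{t,n}-\pi_r)$ to make the match with \cite{hazan2021boosting} explicit, but otherwise your argument and the paper's coincide.
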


    Finally, observe that it is by construction that $\pi_t\in \mathbbl{\Pi}$. Therefore, in terms of the previous section, $\K$ is the class of all policies, $\K_1=\mathbbl{\Pi}$, $\K_2=\Pi$.

    In the episodic model, we wish to invoke the second part of Theorem~\ref{fwthm}. The next lemma establishes gradient-domination properties of $V^\pi$ to support this.

    \begin{lemma}
        \label{lem:dom}
        $V^\pi$ is $\left(C_\infty,\frac{1}{1-\gamma}C_\infty\mathcal{E}(\Pi,\mathbbl{\Pi}),\mathbbl{\Pi},\Pi\right)$-gradient dominated, i.e. for any $\pi\in \mathbbl{\Pi}$:
        $$ V^* - V^\pi \leq C_\infty \left(\frac{1}{1-\gamma}\mathcal{E}(\Pi,\mathbbl{\Pi}) + \max_{\pi'\in \Pi} (\nabla V^\pi)^\top (\pi'-\pi)\right) $$
    \end{lemma}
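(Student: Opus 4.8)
The plan is to chain together the performance difference lemma, a single change-of-measure step controlled by $C_\infty(\mathbbl{\Pi})$, the policy completeness bound, and the policy gradient identity~\eqref{eq:grad2}. Fix $\pi\in\mathbbl{\Pi}$. First I would apply the performance difference lemma of Kakade--Langford with $\pi^*$ and $\pi$:
$$ V^* - V^\pi = \frac{1}{1-\gamma}\,\E_{s\sim d^{\pi^*}}\big[\, Q^\pi(s,\cdot)^\top\pi^*(\cdot|s) - V^\pi(s)\,\big] \le \frac{1}{1-\gamma}\,\E_{s\sim d^{\pi^*}}\big[\,\textstyle\max_{a\in A}Q^\pi(s,a) - V^\pi(s)\,\big], $$
where the inequality uses $Q^\pi(s,\cdot)^\top\pi^*(\cdot|s)\le \max_a Q^\pi(s,a)$. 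Write $g(s) = \max_a Q^\pi(s,a) - V^\pi(s)$; since $V^\pi(s)=Q^\pi(s,\cdot)^\top\pi(\cdot|s)$ is an average of the $Q^\pi(s,a)$'s, we have $g(s)\ge 0$ pointwise. This nonnegativity lets me change measure from $d^{\pi^*}$ to $d^\pi$ at a multiplicative cost of $\|d^{\pi^*}/d^\pi\|_\infty$, and because $\pi\in\mathbbl{\Pi}$ this cost is at most $C_\infty(\mathbbl{\Pi})$, giving
$$ V^* - V^\pi \le \frac{C_\infty(\mathbbl{\Pi})}{1-\gamma}\,\E_{s\sim d^\pi}[\,g(s)\,]. $$

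Next I would decompose $\E_{s\sim d^\pi}[g(s)]$ using a near-best policy in $\Pi$. By the definition of $\mathcal{E}(\mathbbl{\Pi},\Pi)$ applied to $\pi\in\mathbbl{\Pi}$, there is $\pi^\sharp\in\Pi$ with $\E_{s\sim d^\pi}[\max_a Q^\pi(s,a) - Q^\pi(s,\cdot)^\top\pi^\sharp(\cdot|s)]\le \mathcal{E}(\mathbbl{\Pi},\Pi)$, so
$$ \E_{s\sim d^\pi}[g(s)] \le \mathcal{E}(\mathbbl{\Pi},\Pi) + \E_{s\sim d^\pi}\big[\,Q^\pi(s,\cdot)^\top\pi^\sharp(\cdot|s) - V^\pi(s)\,\big]. $$
The remaining expectation is exactly a gradient inner product: by~\eqref{eq:grad2}, $(\nabla V^\pi)^\top\pi' = \frac{1}{1-\gamma}\E_{s\sim d^\pi}[Q^\pi(s,\cdot)^\top\pi'(\cdot|s)]$ for every $\pi'$, hence $(\nabla V^\pi)^\top(\pi^\sharp-\pi) = \frac{1}{1-\gamma}\E_{s\sim d^\pi}[Q^\pi(s,\cdot)^\top\pi^\sharp(\cdot|s)-V^\pi(s)]$. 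Upper bounding by the best $\pi'\in\Pi$ and substituting back yields
$$ V^* - V^\pi \le \frac{C_\infty(\mathbbl{\Pi})}{1-\gamma}\Big(\mathcal{E}(\mathbbl{\Pi},\Pi) + (1-\gamma)\max_{\pi'\in\Pi}(\nabla V^\pi)^\top(\pi'-\pi)\Big) = C_\infty(\mathbbl{\Pi})\Big(\tfrac{\mathcal{E}(\mathbbl{\Pi},\Pi)}{1-\gamma} + \max_{\pi'\in\Pi}(\nabla V^\pi)^\top(\pi'-\pi)\Big), $$
which is the claim (with $\K$ the set of all policies, $\K_1=\mathbbl{\Pi}$, $\K_2=\Pi$, $\kappa=C_\infty(\mathbbl{\Pi})$ and $\tau=\tfrac{C_\infty(\mathbbl{\Pi})\mathcal{E}(\mathbbl{\Pi},\Pi)}{1-\gamma}$).

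Both ingredients (the performance difference lemma and~\eqref{eq:grad2}) are standard, so no step is genuinely hard; the one place to be careful is the change-of-measure step, which is legitimate precisely because $g\ge 0$ and $\pi\in\mathbbl{\Pi}=\K_1$ — this is exactly why gradient domination holds only on $\mathbbl{\Pi}$, with constant $C_\infty(\mathbbl{\Pi})$ rather than an uncontrolled mismatch ratio. The $\nu$-reset analogue used for the second part of Theorem~\ref{thm:MAIN1} follows by the same chain with $d_0$ replaced by $\nu$; the only difference is that bounding $d^{\pi^*}_{d_0}(s)/d^\pi_\nu(s)$ uses $d^\pi_\nu(s)\ge(1-\gamma)\nu(s)$ and so picks up an extra factor $\tfrac{1}{1-\gamma}$, producing the $\mathcal{E}_\nu/(1-\gamma)^2$ and $D_\infty$ terms — that is the main spot where the bookkeeping should be double-checked.
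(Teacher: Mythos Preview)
Your proof is correct and follows essentially the same route as the paper's: the paper invokes Lemma~4.1 of \cite{agarwal2019theory} as a black box (which is precisely the performance-difference-lemma plus nonnegativity-based change-of-measure argument you spell out), and then both proofs split off the policy-completeness term and identify the remainder as $(\nabla V^\pi)^\top(\pi'-\pi)$ via the policy-gradient identity. Your version is simply more self-contained, and your remark about the extra $\tfrac{1}{1-\gamma}$ in the $\nu$-reset case via $d^\pi_\nu(s)\ge(1-\gamma)\nu(s)$ is exactly the content of the paper's Lemma~\ref{lem:domnu}.
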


    Deriving $\kappa,\tau$ from the above lemma along with $\epsilon_0$ from Claim~\ref{thm:newstat}, as a consequence of the second part of Theorem~\ref{fwthm}, we have with probability $1-NT\delta$
    \begin{align*}
        V^*-V^{\bar{\pi}} \leq & C_\infty\frac{\mathcal{E}(\mathbbl{\Pi}, \Pi)}{1-\gamma} +\frac{4C^2_\infty}{(1-\gamma)^3 T} + \frac{4|A|C_\infty}{(1-\gamma)^2\alpha\sqrt{N}} \\
        &+ \frac{2|A|C_\infty}{(1-\gamma)^2 \alpha}\varepsilon_W.
    \end{align*}

    Similarly, in the $\nu$-reset model, the first part of Theorem~\ref{fwthm} provides a local-optimality guarantee for $V^\pi_\nu$. Lemma~\ref{lem:domnu} provides a bound on the function-value gap (on $V^\pi$) provided such local-optimality conditions.
    \begin{lemma}
        \label{lem:domnu}
        For any $\pi\in \mathbbl{\Pi}$, we have
       \begin{align*}
      V^* - V^\pi \leq \frac{1}{1-\gamma}D_\infty \Bigg(&\frac{1}{1-\gamma}\mathcal{E}_\nu(\Pi,\mathbbl{\Pi})  + \max_{\pi'\in \Pi} (\nabla V_\nu^\pi)^\top (\pi'-\pi)\Bigg).
         \end{align*}
    \end{lemma}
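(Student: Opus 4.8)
The plan is to establish Lemma~\ref{lem:domnu} as the $\nu$-reset analogue of Lemma~\ref{lem:dom}: the argument is the same decomposition of the optimality gap into a policy-completeness term and a linear (functional-gradient) term, with one extra change-of-measure step that accounts for the factor $\tfrac{1}{1-\gamma}$ appearing in front of $D_\infty$. First I would invoke the performance difference lemma \cite{kakade2002approximately}. Writing $A^\pi(s,a) = Q^\pi(s,a) - V^\pi(s)$ for the advantage and using that $\pi^*(\cdot|s)\in\Delta_A$,
$$ V^* - V^\pi = \frac{1}{1-\gamma}\,\E_{s\sim d^{\pi^*}_{d_0}}\E_{a\sim\pi^*(\cdot|s)}\big[A^\pi(s,a)\big] \;\le\; \frac{1}{1-\gamma}\,\E_{s\sim d^{\pi^*}_{d_0}}\Big[\max_{a\in A}Q^\pi(s,a) - Q^\pi(s,\cdot)^\top\pi(\cdot|s)\Big]. $$
The key observation for the next step is that the integrand $g(s) \equaldef \max_{a}Q^\pi(s,a) - Q^\pi(s,\cdot)^\top\pi(\cdot|s)$ is pointwise nonnegative (a max dominates a convex average).

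Second I would change measure from $d^{\pi^*}_{d_0}$ to $d^\pi_\nu$. Since the $t=0$ term of the visitation series gives $d^\pi_\nu(s)\ge(1-\gamma)\nu(s)$ for every $s$, and $D_\infty=\|d^{\pi^*}_{d_0}/\nu\|_\infty<\infty$, we get $d^{\pi^*}_{d_0}(s)\le \tfrac{D_\infty}{1-\gamma}\,d^\pi_\nu(s)$ for all $s$; because $g\ge 0$ this yields
$$ V^* - V^\pi \;\le\; \frac{D_\infty}{(1-\gamma)^2}\,\E_{s\sim d^\pi_\nu}\Big[\max_{a\in A}Q^\pi(s,a) - Q^\pi(s,\cdot)^\top\pi(\cdot|s)\Big]. $$
Third I would decompose the bracket using the best in-class approximator. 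Let $\pi^*_\Pi\in\Pi$ attain the minimum in the definition of $\mathcal{E}_\nu(\mathbbl{\Pi},\Pi)$ for the fixed $\pi\in\mathbbl{\Pi}$, so that $\E_{s\sim d^\pi_\nu}[\max_a Q^\pi(s,a) - Q^\pi(s,\cdot)^\top\pi^*_\Pi(\cdot|s)]\le \mathcal{E}_\nu(\mathbbl{\Pi},\Pi)$. Writing
$$ \max_{a}Q^\pi(s,a) - Q^\pi(s,\cdot)^\top\pi(\cdot|s) = \big(\max_{a}Q^\pi(s,a) - Q^\pi(s,\cdot)^\top\pi^*_\Pi(\cdot|s)\big) + Q^\pi(s,\cdot)^\top\big(\pi^*_\Pi(\cdot|s) - \pi(\cdot|s)\big) $$
and taking $\E_{s\sim d^\pi_\nu}$: the first term is at most $\mathcal{E}_\nu(\mathbbl{\Pi},\Pi)$ by the choice of $\pi^*_\Pi$, and for the second I would use the policy gradient identity \eqref{eq:grad2}, namely $\partial V^\pi_\nu/\partial\pi(a|s)=\tfrac{1}{1-\gamma}d^\pi_\nu(s)Q^\pi(s,a)$, which gives exactly $\E_{s\sim d^\pi_\nu}[Q^\pi(s,\cdot)^\top(\pi^*_\Pi(\cdot|s)-\pi(\cdot|s))] = (1-\gamma)(\nabla V^\pi_\nu)^\top(\pi^*_\Pi-\pi) \le (1-\gamma)\max_{\pi'\in\Pi}(\nabla V^\pi_\nu)^\top(\pi'-\pi)$. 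Substituting and collecting the $(1-\gamma)$ powers gives the claimed bound $V^*-V^\pi\le \tfrac{D_\infty}{1-\gamma}\big(\tfrac{1}{1-\gamma}\mathcal{E}_\nu(\Pi,\mathbbl{\Pi}) + \max_{\pi'\in\Pi}(\nabla V^\pi_\nu)^\top(\pi'-\pi)\big)$.

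The calculus of advantages and the policy-gradient identity are entirely routine; the one genuinely delicate point is the change of measure in the second step. It is essential that $g(s)\ge 0$ so that the crude pointwise bound $d^{\pi^*}_{d_0}(s)\le \tfrac{D_\infty}{1-\gamma}d^\pi_\nu(s)$ (rather than a tighter but unavailable comparison) can be applied under the expectation, and one must check the edge case $\nu(s)=0$, which forces $d^{\pi^*}_{d_0}(s)=0$ whenever $D_\infty$ is finite. The extra $\tfrac{1}{1-\gamma}$ lost relative to Lemma~\ref{lem:dom} (which compares $d^{\pi^*}$ directly against $d^\pi$ and hence uses $C_\infty(\mathbbl{\Pi})$) is precisely the price of measuring mismatch against the reset distribution $\nu$. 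No concentration or sampling arguments enter, since this is a purely structural statement about $V^\pi$ and its functional gradient.
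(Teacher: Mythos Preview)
Your proof is correct and follows essentially the same route as the paper. The paper invokes Lemma~4.1 of \cite{agarwal2019theory} (which packages your performance-difference step and the $d^{\pi^*}_{d_0}(s)\le \tfrac{D_\infty}{1-\gamma}d^\pi_\nu(s)$ change of measure into a single inequality $V^*-V^\pi\le \tfrac{D_\infty}{1-\gamma}\max_{\pi_0}(\nabla V^\pi_\nu)^\top(\pi_0-\pi)$), and then adds and subtracts $\max_{\pi'\in\Pi}(\nabla V^\pi_\nu)^\top\pi'$ at the gradient level rather than introducing the specific minimizer $\pi^*_\Pi$ at the $Q$-level as you do; after applying the gradient identity these are the same decomposition.
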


    Again, using the bound on $\max_{\pi'\in \Pi} (\nabla V_\nu^{\bar{\pi}})^\top (\pi'-\bar{\pi})$ Theorem~\ref{fwthm} provides, we have that with probability $1-2NT\delta$
    \begin{align*}
        V^*-V^{\bar{\pi}} \leq \frac{D_\infty\mathcal{E}_\nu(\mathbbl{\Pi}, \Pi)}{(1-\gamma)^2} &+ \frac{2D_\infty }{(1-\gamma)^3 \sqrt{T}} \\
        &+ \frac{2|A|D_\infty}{(1-\gamma)^3 \alpha} \left(\frac{2}{\sqrt{N}}+ \varepsilon_W\right)\\
        &+ \frac{48|A|D_\infty}{(1-\gamma)^3\sqrt{P}}\log \frac{1}{\delta}
    \end{align*}
\end{proof}

     \section{Analysis for Boosting with Weak Online Learning (Proof of Theorem~\ref{thm:MAIN2})}

\begin{proof} [Proof of Theorem~\ref{thm:MAIN2}]
    Similar to the proof of Theorem~\ref{thm:MAIN1}, we establish an equivalence between Algorithm~\ref{alg:MAIN1} and Algorithm~\ref{fw2} on the function $V^\pi$ (or $V^\pi_\nu$ in the $\nu$-reset model), to which Theorem~\ref{fwthm} applies provided smoothness (see Lemma~\ref{lem:smoothness}).

    Indeed, Claim~\ref{thm:newoco} demonstrates $\pi'_t$ serves a linear optimizer over gradients of the function $V^\pi$, and provides a bound on $\epsilon_0$. As before, observe that it is by construction that $\pi_t\in \mathbbl{\Pi}$.

    \begin{claim}
        \label{thm:newoco}
        Let $\beta = \sqrt{\frac{1}{\alpha N}}$, and $\eta_{2,n}=\min\{\frac{2}{n},1\}$. Then, for any $t$, $\pi'_t$ produced by Algorithm~\ref{alg:MAIN2} satisfies with probability $1-\delta$
        \begin{align*}
         \max_{\pi\in\Pi}& (\nabla V^{\pi_{t-1}}_\mu)^\top (\pi  - \pi'_t)  \leq    \frac{2|A|}{(1-\gamma)^2\alpha} \left( \frac{2}{\sqrt{N}} + \frac{R_\W(M)}{M}  + \sqrt{\frac{16\log\delta^{-1}}{M}} \right)
        \end{align*}
    \end{claim}

    In the episodic model, one may combine the second part of Theorem~\ref{fwthm}, which provides a bound on function-value gap for gradient dominated functions, which Lemma~\ref{lem:dom} guarantees, to conclude with probability $1-T\delta$
    \begin{align*}
        V^*-V^{\bar{\pi}} \leq & \frac{C_\infty\mathcal{E}(\mathbbl{\Pi}, \Pi)}{1-\gamma} +\frac{4C^2_\infty (\mathbbl{\Pi})}{(1-\gamma)^3 T} + \frac{4|A|C_\infty}{(1-\gamma)^2\alpha\sqrt{N}} \\
        &+ \frac{2|A|C_\infty}{(1-\gamma)^2 \alpha}\frac{R_\W(M)}{M}+ \frac{8|A|C_\infty\log \delta^{-1}}{(1-\gamma)^2\alpha \sqrt{M}}.
    \end{align*}

    Similarly, in the $\nu$-reset model, Lemma~\ref{lem:domnu} provides a bound on the function-value gap provided local-optimality conditions, which the first part of Theorem~\ref{fwthm} provides for. Again, with probability $1-T\delta$
    \begin{align*}
      V^*-V^{\bar{\pi}} \leq \frac{D_\infty\mathcal{E}_\nu(\mathbbl{\Pi}, \Pi)}{(1-\gamma)^2} + \frac{2D_\infty}{(1-\gamma)^3}\Bigg(\frac{1}{\sqrt{T}}   + \frac{|A|}{\alpha} \Big(\frac{2}{\sqrt{N}} + \frac{R_\W(M)}{M} + \frac{4\log\delta^{-1}}{\sqrt{M}}\Big)  + \frac{24|A|}{\sqrt{P}}\log \frac{1}{\delta}\Bigg).
    \end{align*}
\end{proof}

     \section{Proofs of Supporting Claims}
\subsection{Guarantees on the sampling algorithm}
\begin{proof}[Proof of Lemma~\ref{lemma:q_sampler}]
    Recall $\nabla_\pi V^\pi$ denotes the gradient with respect to the  $|S|\times|A|$-sized representation of the policy $\pi$ -- the functional gradient.
    Then, using the policy gradient theorem \citep{williams1992simple, Sutton1999}, it is given by,
    \begin{equation}\label{eq:grad}
        \frac{\partial V_\mu^\pi}{\partial \pi (a|s)} = \frac{1}{1-\gamma} d^{\pi}_\mu(s) Q^{\pi}(s,a).
    \end{equation}

    The following sources of randomness are at play in the sampling algorithm (Algorithm~\ref{alg:q_sampler}): the distribution $d^{\pi}$ (which encompasses the discount-factor-based random termination, the transition probability, and the stochasticity of $\pi$), and the uniform sampling over $A$. For a fixed $s, \pi$, denote by $\mathcal{Q}^\pi_s$ as the distribution over $\widehat{Q^\pi}(s,\cdot) \in \mathbb{R}^A$, induced by all the aforementioned randomness sources. To conclude the claim, observe that by construction
    \begin{equation}
        \label{eq:unbias}
        \E_{\mathcal{Q}^\pi(s,\cdot)}[\widehat{Q^\pi}(s,\cdot) | \pi, s] = Q^\pi(s,\cdot).
    \end{equation}
\end{proof}
\subsection{Non-convex Frank-Wolfe method (Theorem~\ref{fwthm})}
\begin{proof}[Proof of Theorem~\ref{fwthm}]
    \textbf{Non-convex general case.} Note that for any timestep $t$, it holds due to smoothness that
    \begin{align} 
        f(x_{t})  &=  f( x_{t-1}  + \eta  ( {z_t} - x_{t-1}) )  \\
        &  \geq  f(x_{t-1} )   + \eta   \nabla_{t-1}^\top( z_{t} - x_{t-1})  - \eta^{2} \frac{L}{2} D^2. \label{eq:ee1}
    \end{align}
    Let $t'=\argmin_t f(x_{t}) - f(x_{t-1})$. 
    Note that by telescoping over function-value differences across successive iterates, we get
    \begin{align*}
    f(x_{t'}) - f(x_{t'-1}) \le \frac{1}{T}\Big( f(x_T) - f(x_0)\Big) \le \frac{2H}{T}.
    \end{align*}
Combining with \eqref{eq:ee1}, and plugging in $\eta$, we get
    \begin{align*}
        \nabla_{t'-1}^\top( z_{t'} - x_{t'-1}) &\leq \eta  LD^2/2 + \frac{2H}{T \eta } \\
        &\leq  \sqrt{\frac{2LD^2H}{T}}.
    \end{align*}

    To conclude the claim for the non-convex general case, observe that since $z_{t'}=\mathcal{O}(\nabla_{t'-1})$, it follows by the oracle definition that
    $$  \max_{u\in \K_2}\nabla_{t'-1}^\top u \leq \nabla_{t'-1}^\top z_{t'} + \epsilon_0. $$

    \paragraph{Gradient-dominated case.}
    Let $x^* = \argmax_{x\in\K} f(x)$ and let $h_t = f(x^*)-f(x_t)$.
    \begin{align*}
        h_{t}  & \leq h_{t-1} - \eta_t \nabla_{t-1}^\top ( z_t - x_{t-1}) + \eta_t^2 \frac{L}{2} D^2 \\
        &\mbox{(by smoothness)} \\
        & \leq h_{t-1} -\eta_t \max_{y\in \K_2}\eta_t \nabla_{t-1}^\top ( y - x_{t-1}) + \eta_t^2 \frac{L}{2} D^2 +\eta_t\epsilon_0 \\
        & \mbox{(by oracle guarantee)}  \\
        & \leq  h_{t-1} - \frac{\eta_t}{\kappa} (f(x^*)-f(x_{t-1})) + \eta_t^2 \frac{L}{2} D^2 +\eta_t\left(\epsilon_0+\frac{\tau}{\kappa}\right)\\
        & \mbox{(by gradient domination)} \\
        & = \left(1 - \frac{\eta_t}{\kappa}\right) h_{t-1} + \eta_t^2 \frac{L}{2} D^2 +\eta_t\left(\epsilon_0+\frac{\tau}{\kappa}\right).
    \end{align*}
    The theorem then follows from the following claim.
    \begin{claim}\label{claim:fwlr}
        Let $C\geq 1$. Let $g_t$ be a $H$-bounded positive sequence such that
    $$g_{t}\leq \left(1-\frac{\sigma_t}{C}\right)g_{t-1} + \sigma_t^2 D + \sigma_t E .$$
        Then choosing $\sigma_t = \min\{1, \frac{2C}{t}\}$ implies $g_t \leq \frac{2C^2\max\{2D, H\}}{t} + CE$.
    \end{claim}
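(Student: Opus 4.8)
The plan is to establish the bound $g_t \le \frac{K}{t} + CE$ for every $t\ge 1$ by induction on $t$, where I set $K \defeq 2C^2\max\{2D,H\}$; the claimed conclusion is exactly this with the constant unpacked. This is the familiar potential-decay argument for Frank--Wolfe-type recursions, and the two regimes of the step size $\sigma_t=\min\{1,2C/t\}$ are treated separately: a ``warm-up'' phase $t\le 2C$ where $\sigma_t=1$, and the main phase $t>2C$ where $\sigma_t=2C/t<1$.

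For the base case I would handle every integer $t$ with $t\le 2C$ at once, using only $H$-boundedness: $g_t\le H$. Since $C\ge 1$, we have $\frac{K}{t}\ge\frac{K}{2C}=C\max\{2D,H\}\ge CH\ge H$, hence $g_t\le \frac{K}{t}\le \frac{K}{t}+CE$. (Because $C\ge 1$ this range contains at least $t=1,2$, so the induction has somewhere to start even for non-integer $C$.)

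For the inductive step, fix $t>2C$, so $\sigma_t=2C/t<1$, and assume $g_{t-1}\le\frac{K}{t-1}+CE$ — this holds either by the base case or by the previous step, since $t-1\ge 2$. Substituting $\sigma_t=2C/t$ into the hypothesized recursion and then the inductive hypothesis,
\[
 g_t \le \Big(1-\tfrac{2}{t}\Big)g_{t-1} + \tfrac{4C^2D}{t^2}+\tfrac{2CE}{t}
 \le \Big(1-\tfrac{2}{t}\Big)\Big(\tfrac{K}{t-1}+CE\Big)+\tfrac{4C^2D}{t^2}+\tfrac{2CE}{t}.
\]
The $E$-terms combine exactly: $\big(1-\tfrac2t\big)CE+\tfrac{2CE}{t}=CE$, leaving no additive slack. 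For the rest, $4C^2D\le K$ is immediate from the definition of $K$, so $\tfrac{4C^2D}{t^2}\le\tfrac{K}{t^2}$, and it remains to check $\big(1-\tfrac2t\big)\tfrac{K}{t-1}+\tfrac{K}{t^2}\le\tfrac{K}{t}$.

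This last inequality is the only genuine computation, and it is elementary: dividing by $K$ and multiplying by $t$, it reads $\tfrac{t-2}{t-1}+\tfrac1t\le 1$, i.e.\ $\tfrac{t-2}{t-1}\le\tfrac{t-1}{t}$, i.e.\ $t(t-2)\le(t-1)^2$, i.e.\ $0\le 1$. This closes the induction and gives $g_t\le \tfrac{K}{t}+CE=\tfrac{2C^2\max\{2D,H\}}{t}+CE$. I do not expect a real obstacle here; the only points needing mild care are the treatment of the boundary $t\approx 2C$ when $C$ is not an integer (covered by the ``at once'' base case), and tracking the $E$-terms \emph{exactly} rather than bounding them crudely, so that the additive error does not accumulate across the $t$ iterations.
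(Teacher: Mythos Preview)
Your proof is correct and essentially identical to the paper's: both split into the warm-up phase $t\le 2C$ (where $H$-boundedness alone gives the bound) and the inductive phase $t>2C$, with the same recursion unrolling and the same one-line inequality $(1-\tfrac{2}{t})\tfrac{1}{t-1}+\tfrac{1}{t^2}\le \tfrac{1}{t}$ as the crux. Your presentation is in fact a bit cleaner in isolating the $E$-terms and reducing the key step to $t(t-2)\le (t-1)^2$, but the underlying argument is the same.
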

\end{proof}

\subsection{Smoothness of value function (Lemma~\ref{lem:smoothness})}
\begin{proof}[Proof of Lemma~\ref{lem:smoothness}]
    Consider any two policies $\pi, \pi'$. Using the Performance Difference Lemma (Lemma 3.2 in \citep{agarwal2019theory}, e.g.) and Equation~\ref{eq:grad}, we have
    \begin{align*}
        | V^{\pi'} -& V^{\pi} -  \nabla V^\pi (\pi'-\pi) | \\
        &= \frac{1}{1-\gamma} \Big| \E_{s\sim d^{\pi'}} \left[Q^\pi(\cdot|s)^\top (\pi'(\cdot|s)-\pi(\cdot|s)\right] \\
        & \qquad - \E_{s\sim d^{\pi}} \left[Q^\pi(\cdot|s)^\top (\pi'(\cdot|s)-\pi(\cdot|s)\right] \Big| \\
        &\leq  \frac{1}{(1-\gamma)^2} \|d^{\pi'}-d^\pi\|_1 \|\pi'-\pi\|_{\infty,1}.
    \end{align*}
    The last inequality uses the fact that $\max_{s,a} Q^\pi(s,a) \leq \frac{1}{1-\gamma}$. It suffices to show $\|d^{\pi'}-d^\pi\|_1 \leq \frac{\gamma}{1-\gamma}\|\pi'-\pi\|_{\infty,1}$. To establish this, consider the Markov operator $P^\pi(s'|s) = \sum_{a\in A} P(s'|s,a)\pi(a|s)$ induced by a policy $\pi$ on MDP $M$. For any distribution $d$ supported on $S$, we have
    \begin{align*}
        \| (P^{\pi'}&-P^{\pi}) d\|_1 \\
        &= \sum_{s'}\left|\sum_{s,a} P(s'|s,a) d(s) (\pi'(a|s) - \pi(a|s)\right|\\
        &\leq \sum_{s'} P(s'|s,a) \|d\|_1 \|\pi'-\pi\|_{\infty,1} \\
        &\leq \|\pi'-\pi\|_{\infty,1}.
    \end{align*}
    Using sub-additivity of the $l_1$ norm and applying the above observation $t$ times, we have for any $t$
    $$ \| ((P^{\pi'})^t-(P^{\pi})^t) d\|_1  \leq t \|\pi'-\pi\|_{\infty,1}. $$
    Finally, observe that
    \begin{align*}
        \|d^{\pi'}-d^\pi\|_1 &\leq (1-\gamma)\sum_{t=0}^\infty \gamma^t  \| ((P^{\pi'})^t-(P^{\pi})^t) d_0\|_1 \\
        &\leq \|\pi'-\pi\|_{\infty,1} (1-\gamma) \sum_{t=0}^\infty t \gamma^t \\
        &= \frac{\gamma}{1-\gamma} \|\pi'-\pi\|_{\infty,1}.
    \end{align*}
\end{proof}


\subsection{Gradient domination (Lemma~\ref{lem:dom} and Lemma~\ref{lem:domnu})}
\begin{proof}[Proof of Lemma~\ref{lem:dom}]
    Invoking Lemma 4.1 from \citep{agarwal2019theory} with $\mu=d_0$, we have
    \begin{align*}
        V^* - V^\pi &\leq \left\|\frac{d^{\pi^*}}{d^\pi}\right\|_\infty  \max_{\pi_0} (\nabla V^\pi)^\top (\pi_0-\pi)  \\
        &\leq C_\infty  ( \max_{\pi_0} (\nabla V^\pi)^\top \pi_0 - \max_{\pi'\in \Pi} (\nabla V^\pi)^\top \pi' \\
        & \qquad +   \max_{\pi'\in \Pi} (\nabla V^\pi)^\top (\pi'-\pi) ).
    \end{align*}
    Finally, with the aid of Equation~\ref{eq:grad}, observe that
    \begin{align*}
        \max_{\pi_0} & (\nabla V^\pi)^\top \pi_0 - \max_{\pi'\in \Pi} (\nabla V^\pi)^\top \pi' \\
        &= \min_{\pi'\in \Pi} \frac{1}{1-\gamma} \E_{s\sim d^\pi} \left[ \max_a Q^\pi(s,a) - Q^\pi(\cdot|s)^\top \pi' \right]\\
        &\leq \frac{1}{1-\gamma} \mathcal{E}(\Pi,\mathbbl{\Pi}).
    \end{align*}
\end{proof}

\begin{proof}[Proof of Lemma~\ref{lem:domnu}]
    Invoking Lemma 4.1 from \citep{agarwal2019theory} with $\mu=\nu$, we have
    \begin{align*}
        V^* &- V^\pi \\
        &\leq \frac{1}{1-\gamma}\left\|\frac{d^{\pi^*}}{\nu}\right\|_\infty  \max_{\pi_0} (\nabla V_\nu^\pi)^\top (\pi_0-\pi)  \\
        &\leq \frac{1}{1-\gamma} D_\infty ( \max_{\pi_0} (\nabla V_\nu^\pi)^\top \pi_0 - \max_{\pi'\in \Pi} (\nabla V_\nu^\pi)^\top \pi' \\
        & \qquad +   \max_{\pi'\in \Pi} (\nabla V_\nu^\pi)^\top (\pi'-\pi) ).
    \end{align*}
    Again, with the aid of Equation~\ref{eq:grad}, observe that
    \begin{align*}
        \max_{\pi_0} &(\nabla V_\nu^\pi)^\top \pi_0 - \max_{\pi'\in \Pi} (\nabla V_\nu^\pi)^\top \pi' \\
        &= \min_{\pi'\in \Pi} \frac{1}{1-\gamma} \E_{s\sim d_\nu^\pi} \left[ \max_a Q^\pi(s,a) - Q^\pi(\cdot|s)^\top \pi' \right]\\
        &\leq \frac{1}{1-\gamma} \mathcal{E}_\nu(\Pi,\mathbbl{\Pi}).
    \end{align*}
\end{proof}

\subsection{Supervised linear optimization guarantees}
\begin{proof}[Proof of Claim~\ref{claim:oldstat}]
    The internal boosting subroutine of Algorithm \ref{alg:MAIN1}, that is
    presented in Algorithm \ref{alg:MAIN2}, is an instantiation of Algorithm 3 from \citep{hazan2021boosting}, specializing the decision set to be $\Delta_A$. To note the equivalence, note that in \citep{hazan2021boosting} the algorithm is stated assuming that the center-of-mass of the decision set is at the origin (after a coordinate transform); correspondingly, the update rule in Algorithm~\ref{alg:MAIN1} can be written as
    $$ (\Tilde{\pi}_{n}-\pi) = (1-\eta_{2,n})(\Tilde{\pi}_{n-1}-\pi) + \frac{\eta_{2,n}}{\alpha} (\mathcal{A}_{t,n} - \pi) . $$
For any state $s$, $\pi(\cdot|s)=\frac{1}{A}\mathbf{1}_{|A|}$ corresponds to the center-of-mass of $\Delta_A$.
    Finally, note that maximizing $f^\top x$ over $x\in \K$ is equivalent to minimizing $(-f)^\top x$ over the same domain.
    Therefore, we can apply previous result on boosting for statistical learning from \citep{hazan2021boosting} (Theorem 13).
    Note that $\widehat{Q^\pi}(s, \cdot)$ produced by Algorithm~\ref{alg:q_sampler} satisfies $\|\widehat{Q^\pi}(s, \cdot)\| = \frac{|A|}{1-\gamma}$.
    Let $\mathcal{D}_t$ be the distribution induced by the trajectory sampler in round $t$. This yields the bound in the claim.
\end{proof}

\begin{proof}[Proof of Claim~\ref{thm:newstat}]
    
    Lemma \ref{lemma:q_sampler} allows us to restate the guarantees from Claim \ref{claim:oldstat}  in terms of linear optimization over functional gradients.
    The conclusion thus follows immediately by combining Lemma~\ref{lemma:q_sampler} and Theorem~\ref{claim:oldstat}.
\end{proof}

\subsection{Online linear optimization guarantees (Claim~\ref{thm:newoco})}
\begin{proof}[Proof of Claim~\ref{thm:newoco}]
    In a similar vein to the proof of Claim~\ref{thm:newstat}, here we state the
    a result on boosting for online convex optimization (OCO) from \citep{hazan2021boosting} (Theorem 6), the counterpart of the result used above for the online weak learning case.

    \begin{theorem}
        \label{thm:oldoco}
        Let $\beta = \sqrt{\frac{1}{\alpha N}}$, and $\eta_{2,n}=\min\{\frac{2}{n},1\}$. Then, for any $t$, $\Gamma[\Tilde{\pi}_{t,m,N}]$ produced by Algorithm~\ref{alg:MAIN2} satisfies
   \begin{align*}
   \max_{\pi\in\Pi}\sum_{m=1}^M & \left[ \hat{Q}_{t,m}^\top \pi(s_{t,m}) \right]  - \sum_{m=1}^M \left[ \hat{Q}_{t,m}^\top \Gamma[\Tilde{\pi}_{m,N}](s_{t,m}) \right] \leq \frac{2|A|}{(1-\gamma)\alpha} \left( \frac{2M}{\sqrt{N}} + R_\W(M) \right).
        \end{align*} 
    \end{theorem}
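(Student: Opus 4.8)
The plan is to recognize the final statement as the specialization to the simplex $\Delta_A$ of the online-convex-optimization boosting guarantee of \cite{hazan2021boosting} (their Theorem~6), and to verify that the hypotheses of that abstract result are exactly met by the inner loop of Algorithm~\ref{alg:MAIN2}. Concretely, the per-round objects are the linear losses $x\mapsto \widehat{Q}_{t,m}^\top x$ on $\reals^{|A|}$, whose gradient vector $\widehat{Q}_{t,m}$ has norm $\frac{|A|}{1-\gamma}$ by the construction in Algorithm~\ref{alg:q_sampler}; the decision set is $\Delta_A$, whose center of mass is precisely the uniform policy $\pi_r$ (as already noted for $\pi_r(\cdot|s)=\frac1{|A|}\mathbf{1}$); and the online weak learners $\W^1,\dots,\W^N$ obey the $\alpha$-multiplicative, $R_\W(M)$-regret guarantee of Definition~\ref{def:owl}. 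These three inputs — a gradient-norm bound, identification of the center of mass, and a multiplicative weak learner — are exactly what the abstract theorem consumes, and the prefactor $\frac{2|A|}{(1-\gamma)\alpha}$ arises as the product of the gradient bound $\frac{|A|}{1-\gamma}$ with the $\frac1\alpha$ rescaling used to offset the multiplicative weakness.

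First I would rewrite the inner update (the Frank--Wolfe recursion over $n$) in coordinates centered at $\pi_r$: since $\pi_r$ is the center of mass of $\Delta_A$, the update of Algorithm~\ref{alg:MAIN2} is equivalent to $(\rho_{t,m,n}-\pi_r)=(1-\eta_{2,n})(\rho_{t,m,n-1}-\pi_r)+\frac{\eta_{2,n}}{\alpha}(\W^n-\pi_r)$, a Frank--Wolfe step whose atom is supplied by the weak learner and rescaled by $\frac1\alpha$ so that an $\alpha$-approximate atom behaves like an exact linear maximizer over a shrunk simplex. Because $\frac1\alpha>1$, these iterates may leave $\Delta_A$, which is precisely why the algorithm feeds each $\W^n$ the gradient of the extended, smoothed loss $F_{G,\beta}[-\widehat{Q}_{t,m}]$ rather than of the raw loss. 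I would invoke the two properties of $F_{G,\beta}$ established in \cite{hazan2021boosting,beck2017first}: it is $\frac1\beta$-smooth, and its value at an off-simplex point dominates (up to slack governed by $G$) the raw loss evaluated at the Euclidean projection $\Gamma[\rho_{t,m,N}]$, which is what converts a guarantee on the iterate $\rho_{t,m,N}$ into one on the projected policy appearing on the left-hand side.

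The heart of the argument is a simultaneous telescoping over both the inner Frank--Wolfe index $n$ and the outer round index $m$. For fixed $m$, smoothness of $F_{G,\beta}$ together with the centered update gives the per-step descent inequality $F_{G,\beta}[-\widehat{Q}_{t,m}](\rho_{t,m,n})\le F_{G,\beta}[-\widehat{Q}_{t,m}](\rho_{t,m,n-1})-\eta_{2,n}\,g_{m,n}+\frac{\eta_{2,n}^2}{2\beta}D^2$, where $g_{m,n}=\nabla F_{G,\beta}[-\widehat{Q}_{t,m}](\rho_{t,m,n-1})^\top\big((\rho_{t,m,n-1}-\pi_r)-\tfrac1\alpha(\W^n-\pi_r)\big)$ is the Frank--Wolfe gap associated with the weak learner's atom. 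Crucially, Definition~\ref{def:owl} controls $\sum_{m}\ell_m(\W^n(s_{t,m}))$ only in aggregate over rounds, so one cannot bound each $g_{m,n}$ in isolation; instead I would sum the descent inequality over $n=1,\dots,N$ with $\eta_{2,n}=\min\{\frac2n,1\}$ (yielding the standard $O(\frac{D^2}{\beta N})$ Frank--Wolfe rate per round) and then over $m=1,\dots,M$, at which point the aggregated gaps collapse into exactly the quantity the weak learners promise to control, contributing the $R_\W(M)$ term, while the residual Frank--Wolfe error contributes the $\frac{2M}{\sqrt N}$ term once $\beta=\sqrt{\frac1{\alpha N}}$ is substituted to balance the smoothness and diameter factors.

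The step I expect to be the main obstacle is precisely this coordination of the two approximation scales: the inner loop is a deterministic Frank--Wolfe recursion indexed by $n$, yet its ``linear maximizer'' is an \emph{online} weak learner whose only promise is an $\alpha$-multiplicative, $R_\W(M)$-regret bound aggregated across the $M$ rounds. Arranging the telescoping so that the $N$-step Frank--Wolfe potential and the $M$-round online regret are summed in the correct order, and so that the multiplicative $\frac1\alpha$ factor is absorbed cleanly by the centered rescaling, is the delicate part; equally delicate is the bookkeeping that relates the extended-loss guarantee on the off-simplex iterate $\rho_{t,m,N}$ back to the genuine policy $\Gamma[\rho_{t,m,N}]$ in the claimed bound. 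The remaining ingredients — the gradient-norm bound $\frac{|A|}{1-\gamma}$, the $\frac1\beta$-smoothness of $F_{G,\beta}$, and the diameter of $\Delta_A$ — are routine substitutions into the abstract theorem.
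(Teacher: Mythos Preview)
Your proposal is correct and aligns with the paper's treatment: this theorem is not proved in the paper but is imported from \cite{hazan2021boosting} (Theorem~6), and the paper's surrounding text (the proof of Claim~\ref{thm:newoco}, which points back ``in a similar vein'' to Claim~\ref{thm:newstat}) does exactly what you describe first --- verify that the inner loop of Algorithm~\ref{alg:MAIN2} instantiates the abstract algorithm via the centered update $(\rho_{t,m,n}-\pi_r)=(1-\eta_{2,n})(\rho_{t,m,n-1}-\pi_r)+\frac{\eta_{2,n}}{\alpha}(\W^n-\pi_r)$, that $\pi_r$ is the center of mass of $\Delta_A$, and that $\|\widehat{Q}_{t,m}\|=\frac{|A|}{1-\gamma}$ --- and then invoke the cited result as a black box.

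Where you differ is in scope rather than substance: you additionally sketch the \emph{internal} proof of the cited theorem (the smoothness/extension properties of $F_{G,\beta}$, the per-step Frank--Wolfe descent, and the double telescoping over $n$ and $m$ that couples the deterministic inner recursion to the online weak-learner regret). The paper does none of this and simply quotes the result. Your sketch is faithful to how \cite{hazan2021boosting} proceeds, and the ``main obstacle'' you flag --- ordering the sums so that the $M$-round regret of each $\W^n$ can be invoked while the $N$-step Frank--Wolfe potential telescopes --- is indeed the crux of that external proof, but it is not something the present paper undertakes.
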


Next we invoke online-to-batch conversions. Note that in Algorithm~\ref{alg:MAIN2}, $(s_{t,m}, \hat{Q}_{t,m})$ for any fixed $t$ is sampled i.i.d. from the same distribution.
    Therefore, we can apply online-to-batch results, i.e. Theorem 9.5 in \citep{hazan2019introduction}, on Theorem~\ref{thm:oldoco} to get
    \begin{align*}
         \max_{\pi\in\Pi}& \mathbb{E}_{(s,Q)\sim \mathcal{D}_t} \left[ Q^\top \pi(s) \right]  - \mathbb{E}_{(s,Q)\sim \mathcal{D}_t} \left[ Q^\top \pi'_t(s) \right]  \leq \frac{2|A|}{(1-\gamma)\alpha} \left( \frac{2}{\sqrt{N}} + \frac{R_\W(M)}{M}  + \sqrt{\frac{16\log\delta^{-1}}{M}} \right).
    \end{align*}
    We finally invoke Lemma~\ref{lemma:q_sampler}.
\end{proof}

\subsection{Remaining proofs (Claim~\ref{claim:fwlr})}
\begin{proof}[Proof of Claim~\ref{claim:fwlr}]
    Let $T^*=\argmax_{t} \{t: t\leq 2C\}$. For any $t\leq T^*$, we have $\sigma_t=1$ and $g_t\leq H \leq \frac{2C^2H}{t}$. For $t\geq T^*$, we proceed by induction. The base case ($t=T^*$) is true by the previous display. Now, assume $g_{t-1} \leq \frac{2C^2\max\{2D, H\}}{t-1} + CE$ for some $t> T^*$.
    \begin{align*}
        g_{t} &\leq \left(1-\frac{2}{t}\right)\left( \frac{2C^2\max\{2D, H\}}{t-1} + CE\right) \\
        & \qquad + \frac{4C^2D}{t^2}  +  \frac{2CE}{t} \\
        &\leq CE + 2C^2\max\{2D,H\}\left( \frac{1}{t-1}\left(1-\frac{2}{t}\right) + \frac{1}{t^2} \right)\\
        &= CE + 2C^2\max\{2D,H\} \frac{t^2-2t+t-1}{t^2(t-1)} \\
        &\leq CE + 2C^2\max\{2D,H\} \frac{t(t-1)}{t^2(t-1)}.
    \end{align*}
\end{proof}

\end{document}